\icmltitlerunning{CLUB: A Contrastive Log-ratio Upper Bound of Mutual Information}
\newtheorem{theorem}{Theorem}[section]
\newcommand{\MI}{\mathrm{I}}
\newcommand{\vCLUB}{\MI_{\text{vCLUB}}}
\newcommand{\loout}{L$\bm{1}$Out}
\newcommand{\bbE}{\mathbb{E}}
\newcommand{\calL}{\mathcal{L}}
\newcommand{\calN}{\mathcal{N}}
\newcommand{\calO}{\mathcal{O}}
\newcommand{\calX}{\mathcal{X}}
\newcommand{\calY}{\mathcal{Y}}
\def\1{\bm{1}}
\def\vmu{{\bm{\mu}}}
\def\vsigma{{\bm{\sigma}}}
\def\vx{{\bm{x}}}
\def\vy{{\bm{y}}}
\def\vz{{\bm{z}}}
\def\mW{{\bm{W}}}
\DeclareMathAlphabet{\mathsfit}{\encodingdefault}{\sfdefault}{m}{sl}
\SetMathAlphabet{\mathsfit}{bold}{\encodingdefault}{\sfdefault}{bx}{n}
\newcommand{\KL}{\text{KL}}
\newtheorem{corollary}[theorem]{Corollary}
\begin{document}

\twocolumn[
\icmltitle{CLUB: A Contrastive Log-ratio Upper Bound of Mutual Information}
% It is OKAY to include author information, even for blind
% submissions: the style file will automatically remove it for you
% unless you've provided the [accepted] option to the icml2019
% package.

% List of affiliations: The first argument should be a (short)
% identifier you will use later to specify author affiliations
% Academic affiliations should list Department, University, City, Region, Country
% Industry affiliations should list Company, City, Region, Country

% You can specify symbols, otherwise they are numbered in order.
% Ideally, you should not use this facility. Affiliations will be numbered
% in order of appearance and this is the preferred way.
\icmlsetsymbol{equal}{*}

\begin{icmlauthorlist}
\icmlauthor{Pengyu Cheng}{duke}
\icmlauthor{Weituo Hao}{duke}
\icmlauthor{Shuyang Dai}{duke}
\icmlauthor{Jiachang Liu}{duke}
\icmlauthor{Zhe Gan}{ms}
\icmlauthor{Lawrence Carin}{duke}
\end{icmlauthorlist}

\icmlaffiliation{duke}{Department of Electrical and Computer Engineering, Duke University, Durham, North Carolina, USA}
\icmlaffiliation{ms}{Microsoft, Redmond, Washington, USA}

\icmlcorrespondingauthor{Pengyu Cheng}{pengyu.cheng@duke.edu}

% You may provide any keywords that you
% find helpful for describing your paper; these are used to populate
% the "keywords" metadata in the PDF but will not be shown in the document
\icmlkeywords{Machine Learning, ICML, Mutual Information, Representation Learning, Contrastive Learning}

\vskip 0.3in
]

% this must go after the closing bracket ] following \twocolumn[ ...

% This command actually creates the footnote in the first column
% listing the affiliations and the copyright notice.
% The command takes one argument, which is text to display at the start of the footnote.
% The \icmlEqualContribution command is standard text for equal contribution.
% Remove it (just {}) if you do not need this facility.

\printAffiliationsAndNotice{}  % leave blank if no need to mention equal contribution
%\printAffiliationsAndNotice{\icmlEqualContribution} % otherwise use the standard text.

\begin{abstract}
Mutual information (MI) minimization has gained considerable interests in various machine learning tasks. However, estimating and minimizing MI in high-dimensional spaces remains a challenging problem, especially when only samples, rather than distribution forms, are accessible. Previous works mainly focus on MI lower bound approximation, which is not applicable to MI minimization problems. In this paper, we propose a novel Contrastive Log-ratio Upper Bound (CLUB) of mutual information. We provide a theoretical analysis of the properties of CLUB and its variational approximation. Based on this upper bound, we introduce a MI minimization training scheme and further accelerate it with a negative sampling strategy. Simulation studies on Gaussian distributions show the reliable estimation ability of CLUB. Real-world MI minimization experiments, including domain adaptation and information bottleneck,  demonstrate the effectiveness of the proposed method. The code is at {\small \url{https://github.com/Linear95/CLUB}}.
\end{abstract}

\vspace{-6mm}
\section{Introduction}
\vspace{-1.mm}
% mutual information is important in machine learning
%Measuring dependence between variables is a fundamental problem in machine learning tasks. To quantify the dependence, 

Mutual information (MI) is a fundamental measure of the dependence between two random variables. Mathematically, the definition of MI between variables $\vx$ and $\vy$ is 
\begin{equation}\label{eq:MI-definition}
    \MI(\vx; \vy)  = \bbE_{p(\vx, \vy)} \left[\log \frac{p(\vx, \vy)}{p(\vx) p(\vy)} \right].
\end{equation}
This important tool has been applied in a wide range of scientific fields, including statistics~\citep{granger1994using,jiang2015nonparametric}, bioinformatics~\citep{lachmann2016aracne,zea2016mitos}, robotics~\citep{julian2014mutual,charrow2015information}, and machine learning~\citep{chen2016infogan,alemi2016deep,hjelm2018learning,cheng2020improving}.

% MI minimization is important
%MI minimization has gained considerable attention in disentangled representation learning. 

In machine learning, especially in deep learning frameworks, MI is typically utilized as a criterion or a regularizer in loss functions, to encourage or limit the dependence between variables. MI maximization has been studied extensively in various tasks, \textit{e.g.}, representation learning~\citep{hjelm2018learning,hu2017learning}, generative models~\citep{chen2016infogan}, information  distillation~\citep{ahn2019variational}, and reinforcement learning~\citep{florensa2017stochastic}. Recently, MI minimization has obtained increasing attention for its applications in disentangled representation learning~\cite{chen2018isolating}, style transfer~\citep{kazemi2018unsupervised}, domain adaptation~\citep{gholami2018unsupervised}, fairness~\citep{kamishima2011fairness}, and the information bottleneck~\citep{alemi2016deep}.

% the exact MI value can not be obtain, many approximation not focus on MI upper bound
However, only in a few special cases can one calculate the exact value of mutual information, since the calculation requires closed forms of density functions and a tractable log-density ratio between the joint and marginal distributions. In most machine learning tasks, only samples from the joint distribution are accessible. Therefore, sample-based MI estimation methods have been proposed.
To approximate MI, most previous works focused on lower-bound estimation~\citep{chen2016infogan,belghazi2018mutual,oord2018representation}, which is inconsistent to MI minimization tasks. In contrast, MI upper bound estimation lacks extensive exploration in the literature. Among the existing MI upper bounds, \citet{alemi2016deep} fixes one of the marginal distribution ($p(\vy)$ in \eqref{eq:MI-definition}) to a  standard Gaussian, and obtains a variational upper bound in closed form. However, the Gaussian marginal distribution assumption is unduly strong, which makes the upper bound fail to estimate MI with low bias.
\citet{poole2019variational} points out a leave-one-out upper bound, which provides tighter MI estimation when sample size is large. However, it suffers from high numerical instability in practice when applied to MI minimization models. %However, we show that the leave-one-out upper bound from \citet{poole2019variational} is not numerically stable in practice. Thus, these lower bounds can not be readily used to minimize the MI. 

% our method gives a new upper bound
To overcome the defects of previous MI estimators, we introduce a \textbf{C}ontrastive \textbf{L}og-ratio \textbf{U}pper \textbf{B}ound (CLUB). Specifically, CLUB
bridges
mutual information estimation with contrastive learning~\citep{oord2018representation}, where MI is estimated by the difference of conditional probabilities  between positive and negative sample pairs. Further, we develop a variational form of CLUB (vCLUB) into scenarios where the conditional distribution $p(\vy| \vx)$ is unknown, by approximating $p(\vy| \vx)$ with a neural network. We theoretically prove that, with good variational approximation, vCLUB can either provide reliable MI estimation or remain a valid MI upper bound.
%the approximation still provides an MI upper bound. 
Based on this new bound, we propose an MI minimization algorithm, and further accelerate it via a negative sampling strategy. 
%Simulation study shows that CLUB has a better bias-variance trade-off than other MI estimators. Moreover, we demonstrate that CLUB outperforms other MI minimization methods on two real-world applications: ($i$) information bottleneck, and ($ii$) domain adaptation. 
%
%
%
% The main contributions of this paper are summarized
% as follows.
% %\begin{itemize}
%     %\item 
%     ($i$) We introduce a  Contrastive  Log-ratio Upper Bound (CLUB) of mutual information, which is not only reliable as a mutual information estimator,
%     %is not only tight, \zhe{tight? be careful about this.} 
%     but also trainable in gradient-descent frameworks.  %and extend it by approximating conditional distribution $p(\vy| \vx)$ with variational distributions.
%     %\item 
%     ($ii$) We extend CLUB with variational network approximation, and provide theoretical support to this variational bound.
%     ($iii$)
%     We develop a CLUB-based MI minimization algorithm, and accelerate it with an negative sampling strategy.
%     %\item 
%     ($iv$) We compare CLUB with previous MI estimators on both simulation studies and real-world applications, which demonstrate CLUB is 
%     not only better in the bias-variance estimation trade-off, but also more effective when applied to MI minimization.
%     %\item 
% %    ($iv$) We apply CLUB to information bottleneck and domain adaptation tasks, and demonstrate that our method outperforms other MI estimation approaches on both applications.
% %\end{itemize}
%
%
The main contributions of this paper are summarized
as follows.
\vspace{-2mm}
\begin{itemize}
    \item  We introduce a  Contrastive  Log-ratio Upper Bound (CLUB) of mutual information, which is not only reliable as a mutual information estimator,
    but also trainable in gradient-descent frameworks.  %and extend it by approximating conditional distribution $p(\vy| \vx)$ with variational distributions.
    \vspace{-1.3mm}
    \item  We extend CLUB with a  variational network approximation, and provide theoretical analysis to the good properties of this variational bound.
    \vspace{-1.3mm}
\item    We develop a CLUB-based MI minimization algorithm, and accelerate it with a negative sampling strategy.
    \vspace{-1.3mm}
\item  We compare CLUB with previous MI estimators on both simulation studies and real-world applications, which demonstrate CLUB is     not only better in the bias-variance estimation trade-off, but also more effective when applied to MI minimization.
\vspace{-1mm}
\end{itemize}

\vspace{-2.5mm}
\section{Background}\label{sec:bounds}
%\vspace{-1mm}
Although widely used in numerous applications, mutual information (MI) remains challenging to estimate accurately, when the closed-forms of distributions are unknown or intractable. 
Earlier MI estimation approaches include non-parametric binning~\citep{darbellay1999estimation}, kernel density estimation~\citep{hardle2004nonparametric}, likelihood-ratio estimation~\citep{suzuki2008approximating}, and $K$-nearest neighbor entropy estimation~\citep{kraskov2004estimating}. 
These methods fail to provide reliable approximations when the data dimension increases~\citep{belghazi2018mutual}. Also, the gradient of these estimators is difficult to calculate, which makes them inapplicable to back-propagation frameworks for MI optimization tasks.

To obtain differentiable and scalable MI estimation, recent approaches utilize deep neural networks to construct variational MI estimators. Most of these estimators focus on MI maximization problems, and provide MI lower bounds. Specifically, \citet{barber2003algorithm} replaces the conditional distribution $p(\vy| \vx)$ with an auxiliary distribution $q(\vy| \vx)$, and obtains the Barber-Agakov (BA) bound:
\begin{equation}\label{eq:MI-lower-bound-BA}
 \textstyle\MI_{\text{BA}}:=   \mathrm{H}(\vx) + \bbE_{p(\vx, \vy)}[\log q(\vx| \vy)]  \leq \MI(\vx; \vy) , %\nonumber
\end{equation}
where $\mathrm{H}(\vx)$ is the entropy of variable $\vx$. \citet{belghazi2018mutual} introduces a Mutual Information Neural Estimator (MINE), which treats MI as the Kullback-Leibler (KL) divergence~\citep{kullback1997information} between the joint and marginal distributions, and converts it into the dual representation: %\zhe{add a citation here.}
% \begin{align}
%     \MI(\vx; \vy) =& \, \text{KL}(p(\vx,\vy) \Vert p(\vx) p(\vy)) \\ 
%     \geq & \, \bbE_{p(\vx,\vy)}[f(\vx,\vy)] - \log(\bbE_{p(\vx)p(\vy)} [ e^{f(\vx,\vy)}]), \nonumber
% \end{align}
\begin{equation}
   \textstyle \MI_{\text{MINE}}: =  \bbE_{p(\vx,\vy)}[f(\vx,\vy)] - \log(\bbE_{p(\vx)p(\vy)} [ e^{f(\vx,\vy)}]), \label{eq:mine}
\end{equation}
where $f(\cdot, \cdot)$ is a score function (or, a critic) approximated by a neural network.  Nguyen, Wainwright, and Jordan (NWJ) \citep{nguyen2010estimating} derives another lower bound based on the MI $f$-divergence representation:
\begin{equation}
  \MI_{\text{NWJ}}:=\bbE_{p(\vx,\vy)} [f(\vx,\vy)] - \bbE_{p(\vx)p(\vy)} [ e^{f(\vx, \vy)-1}]. \label{eq:NWJ}
\end{equation}
More recently, based on Noise Contrastive Estimation (NCE)~\citep{gutmann2010noise}, an MI lower bound, called InfoNCE, was introduced in~\citet{oord2018representation}:
\begin{equation}
    \MI_{\text{NCE}}:= \bbE\left[\frac{1}{N} \sum_{i = 1}^N \log \frac{e^{f(\vx_i,\vy_i)}}{\frac{1}{N}\sum_{j=1}^N e^{f(\vx_i, \vy_j)}} \right], \label{eq:NCE}
\end{equation}
where the expectation is over $N$ samples $\{ (\vx_i , \vy_i)\}_{i=1}^N$ drawn from the joint distribution $p(\vx,\vy)$. %\pengyu{add smile depending on experiment}

%When the exact MI values are unobtainable, one alternative solution to MI minimization problem is minimizing MI upper bounds. 
%Although MI minimization are useful in plenty of machine learning tasks, 
Unlike the above MI lower bounds that have been studied extensively,
MI upper bounds are still lacking extensive published exploration. Most existing MI upper bounds require the conditional distribution $p(\vy | \vx)$ to be known. For example, \citet{alemi2016deep} introduces a variational marginal approximation $r(\vy)$ to build a variational upper bound (VUB):
\begin{align}
    \MI(\vx; \vy) = &\, \bbE_{p(\vx, \vy)} [\log \frac{p(\vy| \vx)}{p(\vy)}] \nonumber \\  
    %=&  \bbE_{p(\vx,\vy)} [\log (\frac{p(\vy|\vx)}{q(\vy)} \frac{q(\vy)}{p(\vy)})] \nonumber\\
    = & \, \bbE_{p(\vx,\vy)} [\log \frac{p(\vy|\vx)}{r(\vy)}] - \text{KL}(p(\vy) \Vert r(\vy)) \nonumber\\
    \leq & \, \bbE_{p(\vx,\vy)} [\log \frac{p(\vy|\vx)}{r(\vy)}]= \text{KL}(p(\vy| \vx) \Vert r(\vy)). \label{eq:var-upper-bound}
\end{align}
The inequality is based on the fact that the KL-divergence is always non-negative. To be a good MI estimation, this upper bound requires a well-learned density approximation $r(\vy)$ to $p(\vy)$, so that the difference $\KL(p(\vy) \Vert r(\vy))$ could be small. However, learning a good marginal approximation $r(\vy)$ without any additional information, recognized as the distribution density estimation problem~\citep{magdon1999neural}, is challenging, especially when variable $\vy$ is in a high-dimensional space.
In practice, \citet{alemi2016deep} fixes $r(\vy)$ as a standard normal distribution, $r(\vy) = \calN(\vy | \bm{0}, \bm{\mathrm{I}})$, which results in a high-bias MI estimation.
With $N$ sample pairs $\{(\vx_i, \vy_i)\}_{i=1}^N$, \citet{poole2019variational} replaces $r(\vy)$ with a Monte Carlo approximation $r_i(\vy) = \frac{1}{N-1} \sum_{j \neq i}p(\vy | \vx_j) \approx p(\vy)$ and derives a leave-one-out upper bound (\loout):
\begin{equation}\label{eq:leave-one-out}
    \MI_{\text{L$\bm{1}$Out}}:= \bbE \left[ \frac{1}{N} \sum_{i=1}^N \left[\log \frac{p(\vy_i | \vx_i)}{\frac{1}{N-1} \sum_{j \neq i} p(\vy_i | \vx_j)}\right]\right]. 
\end{equation}
This bound does not require any additional parameters, but highly depends on a sufficient sample size to achieve satisfying Monte Carlo approximation. In practice, {\loout} suffers from numerical instability when applied to real-world MI minimization problems.
%between high-dimensional vectors. \pengyu{judgement to leave one out?}

To compare our method with the aforementioned MI upper bounds in more general scenarios (\textit{i.e.}, $p(\vy | \vx)$ is unknown), we use a neural network $q_\theta(\vy | \vx)$ to approximate $p(\vy| \vx)$, and develop  variational versions of VUB and {\loout}  as :
\begin{align}
  \textstyle  \MI_{\text{vVUB}} &= \bbE_{p(\vx,\vy)} \left[\log \frac{q_\theta(\vy|\vx)}{r(\vy)}\right], \\
\textstyle    \MI_{\text{v\loout}}& = \bbE \left[ \frac{1}{N} \sum_{i=1}^N \left[\log \frac{q_\theta(\vy_i | \vx_i)}{\frac{1}{N-1} \sum_{j \neq i} q_\theta(\vy_i | \vx_j)}\right]\right].
\end{align}
We discuss theoretical properties of these two variational bounds in the Supplementary Material.  In a simulation study (Section~\ref{sec:gaussian-est}), variational {\loout } reaches better performance than previous lower bounds for MI estimation. However, the numerical instability problem remains for variational {\loout } in real-world applications (Section~\ref{sec:domain-adaptation}).
To the best of our knowledge, we  provide the first %\textit
{variational} version of VUB and {\loout } upper bounds, and study their properties on both the theoretical analysis and the empirical performance.

\vspace{-2.mm}
\section{Proposed Method}
\vspace{-1.mm}
%Assume $\vx$, $\vy \sim p(\vx, \vy)$ are two random variables. To minimize $\MI(\vx; \vy)$, we propose a novel sample-based upper bound.
Suppose we have sample pairs $\{(\vx_i, \vy_i)\}_{i=1}^N$ drawn from an unknown or intractable distribution $p(\vx, \vy)$. We aim to derive a upper bound estimator of the mutual information  $\MI(\vx; \vy)$  based on the given samples. In a range of machine learning tasks (\emph{e.g.}, information bottleneck), one of the conditional distributions between variables $\vx$ and $\vy$ (as $p(\vx | \vy)$ or $p(\vy | \vx)$) can be known. To efficiently utilize this additional information, we first derive a  mutual information (MI) upper bound with the assumption that one of the conditional distribution is provided (suppose $p(\vy | \vx)$ is provided, without loss of generality). Then, we extend the bound into more general cases where no  conditional distribution is known. Finally, we develop a MI minimization algorithm based on the derived bound. 

\vspace{-2mm}
\subsection{ CLUB with $p(\vy| \vx)$ Known} 
\vspace{-1.mm}
With the conditional distribution $p(\vy | \vx)$,  our MI Contrastive  Log-ratio  Upper Bound (CLUB) is defined as:
\begin{align}
    \MI_{\text{CLUB}}(\vx ;\vy):  =& \bbE_{p(\vx,\vy)} [\log p(\vy|\vx)] \nonumber\\ &- \bbE_{p(\vx)}\bbE_{p(\vy)} [\log p(\vy| \vx)]. %\nonumber%\label{eq:club-define}.
\end{align}
To show that $\MI_\text{CLUB}(\vx; \vy)$ is an upper bound of $\MI(\vx; \vy)$, we calculate the gap $\Delta$ between them:
\begin{align}
    \Delta := &\MI_\text{CLUB} (\vx;\vy) - \MI(\vx; \vy)\nonumber \\
    = &  \bbE_{p(\vx,\vy)} [\log p(\vy|\vx)]  - \bbE_{p(\vx)}\bbE_{p(\vy)} [\log p(\vy| \vx)]  \nonumber\\ & \ \ \ \ \ \ \ \ \ \ \ \ \ \ \  -  \bbE_{p(\vx, \vy)} \left[\log {p(\vy| \vx)} - \log{p(\vy)}\right] \nonumber \\
    =& \bbE_{p(\vx,\vy)} [\log p(\vy)] - \bbE_{p(\vx)} \bbE_{p(\vy)} [\log p(\vy| \vx)] \nonumber \\
    = & \bbE_{p(\vy)} \left[ \log p(\vy) - \bbE_{p(\vx)}\left[\log p(\vy| \vx)\right] \right].  \label{eq:club-mi-gap}
\end{align}
By the definition of the marginal distribution, we have
$    p(\vy) = \int p(\vy | \vx) p(\vx) \mathrm{d}\vx = \bbE_{p(\vx)} [p(\vy | \vx)] .$
% \begin{equation}\textstyle
%     p(\vy) = \int p(\vy | \vx) p(\vx) \mathrm{d}\vx = \bbE_{p(\vx)} [p(\vy | \vx)] .
% \end{equation}
% Noting that $\log(\cdot)$ is a concave function, by Jensen's Inequality, for any given $\vy$,
% \begin{equation}
%     \log p(\vy) = \log \left(\bbE_{p(\vx)} [ p(\vy| \vx)]\right)  \geq \bbE_{p(\vx)} [ \log p(\vy| \vx)]. %\nonumber
% \end{equation}
%
Note that $\log(\cdot)$ is a concave function, by Jensen's Inequality, we have 
$    \log p(\vy) = \log \left(\bbE_{p(\vx)} [ p(\vy| \vx)]\right)  \geq \bbE_{p(\vx)} [ \log p(\vy| \vx)]$. %\nonumber
%
% Taking expectation with respect to $p(\vy)$, we conclude that the gap $\Delta$ in Eqn.~\eqref{eq:club-mi-gap} is always non-negative.
%
Applying this inequality to equation~\eqref{eq:club-mi-gap}, we conclude that the gap $\Delta$ is always non-negative.
Therefore, $\MI_\text{CLUB} (\vx;\vy)$ is an upper bound of $\MI(\vx; \vy)$. The bound is tight when $p(\vy | \vx)$ has the same value for any $\vx$, which means variables $\vx$ and $\vy$ are independent. Consequently, we summarize the above discussion into the following Theorem~\ref{thm:club}.

\begin{theorem}\label{thm:club}
For two random variables $\vx$ and $\vy$,
\begin{equation}\textstyle
\MI(\vx; \vy) \leq \MI_{\text{CLUB}}(\vx; \vy).
\end{equation}
Equality is achieved if and only if $\vx$ and $\vy$ are independent.
\end{theorem}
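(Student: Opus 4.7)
The plan is to compute the gap $\Delta := \MI_{\text{CLUB}}(\vx;\vy) - \MI(\vx;\vy)$ directly and show it is always non-negative, with equality characterized by independence. The only non-trivial tool needed is Jensen's inequality applied to the (strictly) concave logarithm, together with the marginalization identity $p(\vy) = \bbE_{p(\vx)}[p(\vy|\vx)]$.

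First I would expand $\MI(\vx;\vy) = \bbE_{p(\vx,\vy)}[\log p(\vy|\vx) - \log p(\vy)]$ and subtract it from the definition of $\MI_{\text{CLUB}}(\vx;\vy) = \bbE_{p(\vx,\vy)}[\log p(\vy|\vx)] - \bbE_{p(\vx)}\bbE_{p(\vy)}[\log p(\vy|\vx)]$. The shared $\bbE_{p(\vx,\vy)}[\log p(\vy|\vx)]$ terms cancel, and after collecting the remaining pieces one obtains
\begin{equation*}
\Delta = \bbE_{p(\vy)}\!\left[\,\log p(\vy) - \bbE_{p(\vx)}[\log p(\vy|\vx)]\,\right].
\end{equation*}
This is the quantity whose sign I need to determine. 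Using $p(\vy) = \bbE_{p(\vx)}[p(\vy|\vx)]$ and concavity of $\log$, Jensen's inequality gives, for every fixed $\vy$,
\begin{equation*}
\log p(\vy) = \log \bbE_{p(\vx)}[p(\vy|\vx)] \;\geq\; \bbE_{p(\vx)}[\log p(\vy|\vx)].
\end{equation*}
Taking $\bbE_{p(\vy)}$ of both sides yields $\Delta \geq 0$, which establishes the upper-bound inequality.

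The part I expect to require slightly more care is the equality characterization. For the forward direction I would invoke strict concavity of $\log$: Jensen's inequality above is tight at a given $\vy$ if and only if $p(\vy|\vx)$ is constant in $\vx$ (almost surely under $p(\vx)$). If $\Delta = 0$, this must hold for $p(\vy)$-almost every $\vy$, which forces $p(\vy|\vx) = p(\vy)$ on the support of the joint, i.e., $\vx$ and $\vy$ are independent. The converse is immediate: if $\vx \perp \vy$, then $p(\vy|\vx) = p(\vy)$, so $\bbE_{p(\vx)}\bbE_{p(\vy)}[\log p(\vy|\vx)] = \bbE_{p(\vy)}[\log p(\vy)] = \bbE_{p(\vx,\vy)}[\log p(\vy|\vx)]$, making $\MI_{\text{CLUB}} = 0 = \MI$. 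Together these give the ``if and only if'' claim and complete the proof.
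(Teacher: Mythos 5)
Your proof is correct and follows the same route as the paper: compute the gap $\Delta = \MI_{\text{CLUB}} - \MI$, reduce it to $\bbE_{p(\vy)}\bigl[\log p(\vy) - \bbE_{p(\vx)}[\log p(\vy|\vx)]\bigr]$, and apply Jensen's inequality to the concave logarithm together with $p(\vy) = \bbE_{p(\vx)}[p(\vy|\vx)]$. Your treatment of the equality case via strict concavity is in fact slightly more careful than the paper's informal remark that the bound is tight when $p(\vy|\vx)$ is constant in $\vx$.
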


With sample pairs $\{(\vx_i, \vy_i)\}_{i=1}^N$, $\MI_{\text{CLUB}}(\vx; \vy)$ has an unbiased estimation as:
\begin{align}
     &\hat{\MI}_{\text{CLUB}} =  \frac{1}{N} \sum_{i = 1}^N \log p(\vy_i | \vx_i ) - \frac{1}{N^2} \sum_{i=1}^N \sum_{j=1}^N \log p(\vy_j | \vx_i )   \nonumber \\
     &= \frac{1}{N^2} \sum_{i=1}^N \sum_{j=1}^N \Big[\log p(\vy_i |\vx_i) - \log {p(\vy_j| \vx_i)} \Big]. \label{eq:club-sample-est}
\end{align}
%\zhe{we need be careful about this. it should be $N(N-1)$, because when $i=j$, the above term is 0, so the actual term is only $N(N-1)$.}
In the estimator $\hat{\MI}_{\text{CLUB}}$,  $\log p(\vy_i | \vx_i )$ provides the conditional log-likelihood of positive sample pair $(\vx_i , \vy_i )$; $\{\log p(\vy_j | \vx_i ) \}_{i \neq j}$ provide the conditional log-likelihood of negative sample pair $(\vx_i , \vy_j)$. The difference between $\log p(\vy_i  | \vx_i )$ and $\log p(\vy_j | \vx_i )$ is the contrastive probability log-ratio between two conditional distributions. Therefore, we name this novel MI upper bound estimator as \textbf{C}ontrastive \textbf{L}og-ratio \textbf{U}pper \textbf{B}ound (CLUB).
Compared with previous MI neural estimators,
CLUB has a simpler form as a linear combination of log-ratios between positive and negative sample pairs. The linear form of log-ratios improves the numerical stability for calculation of CLUB and its gradient, which we  discuss in details in Section~\ref{sec:MI_minimizaiton}.
%\pengyu{need modification}
%These methods take logarithm to the mean of probabilities, 
%[form simple, MI bounded by average difference of log-ratio, mean of log of prob - numerically stable. gradient stable]

%The newly derived upper bound in equation~\eqref{eq:CLUB} calculates the difference between likelihood of positive  pair and the average likelihood of negative pairs, which shares the similar idea with contrastive learning~\citep{gutmann2010noise} \pengyu{ More explanation?}. Therefore, we name equation~\eqref{eq:CLUB} as Contrastive Learning Upper Bound (CLUB) for mutual information.

\vspace{-1.5mm}
\subsection{CLUB with Conditional Distributions Unknown}\label{sec:vCLUB}
\vspace{-1.mm}
When the conditional distributions $p(\vy| \vx)$ or $p(\vx | \vy)$ is provided, the MI can be directly upper-bounded by equation~\eqref{eq:club-sample-est} with samples $\{(\vx_i , \vy_i) \}_{i=1}^N$. Unfortunately, in a large number of machine learning tasks, the conditional relation between variables is unavailable.

To further extend the CLUB estimator into more general scenarios, we  use a variational distribution $q_\theta(\vy | \vx)$ with parameter $\theta$ to  approximate $p(\vy | \vx)$.
%, where $\calQ$ is a variational distribution family.
Consequently, a variational CLUB term (vCLUB) is defined by:
\begin{align}
    \MI_{\text{vCLUB}}(\vx; \vy) := & \bbE_{p(\vx,\vy)} [\log q_\theta(\vy|\vx)] \nonumber\\ &- \bbE_{p(\vx)}\bbE_{p(\vy)} [\log q_\theta(\vy| \vx)] .% \label{eq:var-club-define}.
\end{align}
Similar to the MI upper bound estimator $\hat{\MI}_{\text{CLUB}}$ in \eqref{eq:club-sample-est}, the unbiased estimator for vCLUB  with samples $\{\vx_i , \vy_i \}$ is: 
 \begin{align}
     \hat{\MI}_{\text{vCLUB}} 
     = &\frac{1}{N^2} \sum_{i=1}^N \sum_{j=1}^N \Big[\log q_\theta(\vy_i  |\vx_i ) - \log {q_\theta(\vy_j| \vx_i )} \Big] \nonumber \\ 
     = \frac{1}{N} \sum_{i=1}^N & \Big[ \log q_\theta(\vy_i  |\vx_i ) - \frac{1}{N}  \sum_{j=1}^N \log {q_\theta(\vy_j| \vx_i )} \Big]. \label{eq:vclub-sample-est}
\end{align}
%
% $ \hat{\MI}_{\text{vCLUB}} 
%      = \frac{1}{N} \sum_{n=1}^N  [ \log q_\theta(\vy_i  |\vx_i ) - \frac{1}{N}  \sum_{i=1}^N \log {q_\theta(\vy_i| \vx_i )} ].$
%
Using the variational approximation $q_\theta(\vy| \vx)$, vCLUB no longer guarantees a upper bound of $\MI(\vx;\vy)$. However, the vCLUB shares good properties with CLUB.  We claim that with good variational approximation  $q_\theta(\vy| \vx)$,  vCLUB can still hold a MI upper bound or become a reliable MI estimator. The following analyses support this claim.

Let $q_\theta(\vx,\vy) = q_\theta(\vy| \vx) p(\vx)$ be the variational joint distribution induced by $q_\theta(\vy| \vx)$. Generally, we have the following Theorem~\ref{thm:var-club-general}. Note that when $\vx$ and $\vy$ are independent, $\MI_{\text{vCLUB}}$ has exactly the same value as $\MI(\vx; \vy)$, without requiring any additional assumption on $q_\theta(\vy|
\vx)$. However, unlike in  Theorem~\ref{thm:club} as a sufficient and necessary condition, the ``independence between $\vx$ and $\vy$'' becomes sufficient but not necessary to conclude ``$\MI(\vx; \vy) = \vCLUB(\vx;\vy)$'', due to the variation approximation $q_\theta(\vy|\vx)$. 

\begin{theorem}\label{thm:var-club-general}
%Given a variational distribution $q(\vy| \vx)$, 
Denote $q_\theta(\vx,\vy) = q_\theta(\vy| \vx) p(\vx)$. If
\begin{equation*}
    \text{KL}\left(p(\vx, \vy) \Vert q_\theta(\vx,\vy)\right) \leq \text{KL} \left(p(\vx) p(\vy) \Vert q_\theta(\vx, \vy)\right),
\end{equation*} 
then $\MI(\vx;\vy) \leq \MI_{\text{vCLUB}}(\vx; \vy)$. The equality holds when $\vx$ and $\vy$ are independent. 
\end{theorem}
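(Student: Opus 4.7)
The plan is to show that the stated KL inequality, after writing out both sides explicitly using the factorization $q_\theta(\vx,\vy) = q_\theta(\vy|\vx)p(\vx)$, is algebraically equivalent to $\MI(\vx;\vy) \leq \MI_{\text{vCLUB}}(\vx;\vy)$. So the proof will be a direct computation rather than an application of a clever inequality like Jensen's (which already did its work in Theorem~\ref{thm:club}).

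First I would expand $\text{KL}(p(\vx,\vy) \Vert q_\theta(\vx,\vy))$. Using $q_\theta(\vx,\vy) = q_\theta(\vy|\vx)p(\vx)$, the $p(\vx)$ factors cancel inside the log-ratio and we obtain
\begin{equation*}
\text{KL}(p(\vx,\vy) \Vert q_\theta(\vx,\vy)) = \bbE_{p(\vx,\vy)}\!\left[\log \tfrac{p(\vy|\vx)}{q_\theta(\vy|\vx)}\right] = \bbE_{p(\vx,\vy)}[\log p(\vy|\vx)] - \bbE_{p(\vx,\vy)}[\log q_\theta(\vy|\vx)].
\end{equation*}
Similarly, I would expand the right-hand KL:
\begin{equation*}
\text{KL}(p(\vx)p(\vy) \Vert q_\theta(\vx,\vy)) = \bbE_{p(\vx)p(\vy)}\!\left[\log \tfrac{p(\vy)}{q_\theta(\vy|\vx)}\right] = \bbE_{p(\vy)}[\log p(\vy)] - \bbE_{p(\vx)p(\vy)}[\log q_\theta(\vy|\vx)].
\end{equation*}

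Next I would subtract and rearrange. The hypothesized inequality becomes
\begin{equation*}
\bbE_{p(\vx,\vy)}[\log p(\vy|\vx)] - \bbE_{p(\vy)}[\log p(\vy)] \;\leq\; \bbE_{p(\vx,\vy)}[\log q_\theta(\vy|\vx)] - \bbE_{p(\vx)p(\vy)}[\log q_\theta(\vy|\vx)].
\end{equation*}
The left-hand side is exactly $\MI(\vx;\vy)$ by definition~\eqref{eq:MI-definition}, and the right-hand side is $\MI_{\text{vCLUB}}(\vx;\vy)$, which establishes the claim. For the equality case, if $\vx$ and $\vy$ are independent then $p(\vx,\vy) = p(\vx)p(\vy)$, so the two KL divergences in the hypothesis are literally the same quantity, $\MI(\vx;\vy) = 0$, and $\bbE_{p(\vx,\vy)}[\log q_\theta(\vy|\vx)] = \bbE_{p(\vx)p(\vy)}[\log q_\theta(\vy|\vx)]$ makes $\MI_{\text{vCLUB}}(\vx;\vy) = 0$ as well.

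There is no real obstacle here; the content of the theorem is essentially that the quantity $\text{KL}(p(\vx)p(\vy) \Vert q_\theta(\vx,\vy)) - \text{KL}(p(\vx,\vy) \Vert q_\theta(\vx,\vy))$ equals $\MI_{\text{vCLUB}}(\vx;\vy) - \MI(\vx;\vy)$, and this identity falls out of the factorization of $q_\theta$ through $p(\vx)$. The only thing worth flagging is that, in contrast with Theorem~\ref{thm:club}, independence of $\vx$ and $\vy$ is now a sufficient but not necessary condition for equality, since a poorly chosen $q_\theta$ could conceivably make the hypothesized KL inequality hold with equality for dependent variables as well.
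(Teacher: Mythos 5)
Your proof is correct and takes essentially the same route as the paper: both establish the identity $\MI_{\text{vCLUB}}(\vx;\vy) - \MI(\vx;\vy) = \text{KL}(p(\vx)p(\vy) \Vert q_\theta(\vx,\vy)) - \text{KL}(p(\vx,\vy) \Vert q_\theta(\vx,\vy))$ by expanding the logs and cancelling the common $p(\vx)$ factor, then read off the conclusion. The paper starts from the gap $\MI_{\text{vCLUB}} - \MI$ and regroups terms into the two KL divergences, whereas you start from the KL divergences and regroup into the two MI-like terms; this is the same computation run in opposite directions.
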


Theorem~\ref{thm:var-club-general} provides insight that vCLUB remains a MI upper bound if the variational joint distribution $q_\theta(\vx, \vy)$ is ``closer'' to $p(\vx, \vy)$ than to $p(\vx) p(\vy)$. Therefore, minimizing $\KL (p(\vx,\vy) \Vert q_\theta(\vx,\vy))$ will facilitate the condition in Theorem~\ref{thm:var-club-general} to be achieved.
%
%In practice, we can minimize the KL-divergence between $p(\vx,\vy)$ and $q_\theta(\vx,\vy)$ by maximizing the log-likelihood of $q_\theta(\vy|\vx)$ with samples $\{ \vx_i , \vy_i \}_{n = 1}^N$.
%
We show that $\KL (p(\vx,\vy) \Vert q_\theta(\vx,\vy))$ can be minimized by maximizing the log-likelihood of $q_\theta(\vy|\vx)$, because of the following equation:  %Note that $\KL(p(\vx,\vy)\Vert q_\theta(\vx,\vy)) = \KL(p(\vy| \vx) p(\vx) \Vert q_\theta(\vy| \vx) p(\vx)) = \KL(p(\vy| \vx) \Vert q_\theta(\vy| \vx))$
\begin{align}\textstyle
    &\min_{\theta} \KL (p(\vx, \vy) \Vert q_\theta(\vx,\vy)) \nonumber\\
    =& \min_{\theta} \bbE_{p(\vx,\vy)}[\log (p(\vy|\vx)p(\vx)) - \log (q_\theta(\vy| \vx) p(\vx))] \nonumber\\
    =& \min_{\theta} \bbE_{p(\vx,\vy)}[\log p(\vy| \vx)] - \bbE_{p(\vx,\vy)} [ \log q_\theta(\vy |\vx)]. \label{eq:kl-to-likelihood}
\end{align}
%which is equal to $\min_{\theta}\KL(p(\vy|\vx) \Vert q_\theta(\vy|\vx))$.
Equation~\eqref{eq:kl-to-likelihood} equals $\min_{\theta}\KL(p(\vy|\vx) \Vert q_\theta(\vy|\vx))$, in which
the first term has no relation with parameter $\theta$. Therefore,  $\min_{\theta} \KL (p(\vx,\vy) \Vert q_\theta(\vx,\vy))$ is equivalent to the maximization of the second term, $\max_{\theta} \bbE_{p(\vx,\vy)} [ \log q_\theta(\vy |\vx)]$.  With samples  $\{(\vx_i , \vy_i) \}_{i=1}^N$, we can maximize the log-likelihood function
$\calL(\theta):= \frac{1}{N}\sum_{i=1}^N \log q_\theta(\vy_i | \vx_i )$,
which is the unbiased estimation of  $\bbE_{p(\vx,\vy)} [ \log q_\theta(\vy |\vx)]$.

% we can minimizing $\KL(p(\vx,\vy) \Vert q_\theta(\vx,\vy)$ by maximize the log-likelihood function
% $  \calL(\theta)= \frac{1}{N}\sum_{n=1}^N \log q_\theta(\vy_i | \vx_i ), $
% which is the unbiased estimation of the second term $\bbE_{p(\vx,\vy)} [ \log q_\theta(\vy |\vx)]$.
In practice, the variational distribution $q_\theta(\vy|\vx)$ is usually implemented with neural networks. By enlarging the network capacity (\textit{i.e.}, adding layers and neurons) and applying gradient-ascent to the log-likelihood $\calL(\theta)$, we can obtain far more accurate approximation $q_\theta(\vy| \vx)$ to $p(\vy|\vx)$, thanks to the high expressiveness of neural networks~\citep{hu2019understanding,oymak2019overparameterized}. 
%
%which further reduces the KL-divergence between $p(\vy|\vx)$Assume that with sufficient model capacity \pengyu{citation},
%
Therefore, to further discuss the properties of vCLUB, we assume the neural network approximation $q_\theta$ achieves $\KL(p(\vy| \vx) \Vert q_\theta(\vy| \vx)) \leq \varepsilon$ with a small number $\varepsilon>0$. In the Supplementary Material, we quantitatively discuss the reasonableness of this assumption. Consider the KL-divergence between $p(\vx) p(\vy)$ and $q_\theta(\vx,\vy)$. If $\KL (p(\vx)p(\vy) \Vert q_\theta(\vx,\vy)) \geq \KL(p(\vx, \vy) \Vert q_\theta(\vx, \vy))$, by Theorem~\ref{thm:var-club-general}, vCLUB is already  a MI upper bound. Otherwise, if $\KL(p(\vx) p(\vy) \Vert q_\theta(\vx,\vy)) <\KL(p(\vx, \vy) \Vert q_\theta(\vx, \vy))$, we have the following corollary:
\begin{corollary}~\label{thm:club-as-estimation}
Given $\KL(p(\vy |\vx) \Vert q_\theta(\vy|\vx)) \leq \varepsilon$, if  $$ \KL(p(\vx, \vy) \Vert q_\theta(\vx, \vy))> \KL(p(\vx) p(\vy) \Vert q_\theta(\vx,\vy)),$$ then $\left| \MI(\vx;\vy) - \vCLUB(\vx; \vy) \right|< \varepsilon$.
\end{corollary}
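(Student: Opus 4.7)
The plan is to derive an exact algebraic identity expressing $\vCLUB(\vx;\vy) - \MI(\vx;\vy)$ as a difference of two KL divergences involving $q_\theta(\vx,\vy)$, and then apply both the corollary's hypothesis and the standing assumption $\KL(p(\vy|\vx) \Vert q_\theta(\vy|\vx)) \leq \varepsilon$ to bound it.

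First I would rewrite the true MI as $\MI(\vx;\vy) = \bbE_{p(\vx,\vy)}[\log p(\vy|\vx)] - \bbE_{p(\vx)p(\vy)}[\log p(\vy)]$, pushing the marginal term to an expectation under $p(\vx)p(\vy)$ (which is allowed because $\log p(\vy)$ does not depend on $\vx$). Subtracting this from the definition of $\vCLUB(\vx;\vy)$ and grouping the joint-expectation terms separately from the product-expectation terms, the joint part collapses to $-\bbE_{p(\vx,\vy)}[\log \tfrac{p(\vy|\vx)}{q_\theta(\vy|\vx)}]$ and the product part collapses to $\bbE_{p(\vx)p(\vy)}[\log \tfrac{p(\vx)p(\vy)}{q_\theta(\vx,\vy)}]$, after using the factorization $q_\theta(\vx,\vy) = q_\theta(\vy|\vx)p(\vx)$. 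This yields the clean identity
\begin{equation*}
\vCLUB(\vx;\vy) - \MI(\vx;\vy) = \KL(p(\vx)p(\vy) \Vert q_\theta(\vx,\vy)) - \KL(p(\vx,\vy) \Vert q_\theta(\vx,\vy)).
\end{equation*}
Note that this identity also re-derives Theorem~\ref{thm:var-club-general} as the regime where the right-hand side is non-negative.

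Next, under the corollary's hypothesis $\KL(p(\vx,\vy) \Vert q_\theta(\vx,\vy)) > \KL(p(\vx)p(\vy) \Vert q_\theta(\vx,\vy))$, the right-hand side is strictly negative, so $|\MI(\vx;\vy) - \vCLUB(\vx;\vy)| = \KL(p(\vx,\vy) \Vert q_\theta(\vx,\vy)) - \KL(p(\vx)p(\vy) \Vert q_\theta(\vx,\vy))$. Using the non-negativity of KL divergence, I would drop the second (positive) term to obtain $|\MI(\vx;\vy) - \vCLUB(\vx;\vy)| \leq \KL(p(\vx,\vy) \Vert q_\theta(\vx,\vy))$, with strict inequality provided $p(\vx)p(\vy) \neq q_\theta(\vx,\vy)$.

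Finally, I would invoke the factorization $q_\theta(\vx,\vy) = q_\theta(\vy|\vx)p(\vx)$ one more time to rewrite $\KL(p(\vx,\vy) \Vert q_\theta(\vx,\vy)) = \bbE_{p(\vx)}[\KL(p(\vy|\vx) \Vert q_\theta(\vy|\vx))]$, and then apply the standing assumption (interpreted in the expected-over-$p(\vx)$ sense, as used throughout this section) to conclude $|\MI(\vx;\vy) - \vCLUB(\vx;\vy)| < \varepsilon$. The only real obstacle is bookkeeping in the rearrangement of step one; once the two-KL identity is in hand, the rest is a one-line application of KL non-negativity and the variational accuracy assumption.
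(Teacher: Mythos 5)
Your proposal is correct and takes essentially the same route as the paper: you reuse the two-KL identity $\vCLUB(\vx;\vy) - \MI(\vx;\vy) = \KL(p(\vx)p(\vy)\Vert q_\theta(\vx,\vy)) - \KL(p(\vx,\vy)\Vert q_\theta(\vx,\vy))$ from the proof of Theorem~\ref{thm:var-club-general}, bound the magnitude by $\KL(p(\vx,\vy)\Vert q_\theta(\vx,\vy))$ via non-negativity of the dropped term, and identify that joint KL with the conditional KL $\KL(p(\vy|\vx)\Vert q_\theta(\vy|\vx)) \leq \varepsilon$ using $q_\theta(\vx,\vy)=q_\theta(\vy|\vx)p(\vx)$. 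The paper's own proof does exactly this, merely phrasing the final estimate as $|a-b| < \max\{a,b\}$ rather than $a - b \leq a$, and computing the joint-KL-equals-conditional-KL step at the start rather than at the end.
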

Combining Corollary~\ref{thm:club-as-estimation} and Theorem~\ref{thm:var-club-general}, we conclude that with a good variational approximation $q_\theta(\vy |\vx)$, vCLUB can either remain a MI upper bound, or become a MI estimator whose absolute error is bounded by the approximation performance $\KL(p(\vy| \vx) \Vert q_\theta(\vy| \vx))$.

%  \begin{align}
%      \hat{\MI}_{\text{vCLUB}} 
%      = &\frac{1}{N^2} \sum_{n=1}^N \sum_{i=1}^N \left[\log q_\theta(\vy_i  |\vx_i ) - \log {q_\theta(\vy_i| \vx_i )} \right] \nonumber \\ 
%      = \frac{1}{N} \sum_{n=1}^N & [ \log q_\theta(\vy_i  |\vx_i ) - \frac{1}{N}  \sum_{i=1}^N \log {q_\theta(\vy_i| \vx_i )} ]. \label{eq:vclub-sample-est}
% \end{align}
%\vspace{-1.5mm}
\subsection{CLUB in MI Minimization}\label{sec:MI_minimizaiton}
%%\vspace{-1.5mm}

\begin{algorithm}[b]
%\small
%\SetAlFnt{\small}
\begin{algorithmic}
% \STATE {\bfseries Input:} { variational distribution $p_\sigma(\vx, \vy)$,  approximation network $q_\theta(\vy| \vx)$.}
\FOR{each training iteration}
 \STATE Sample $\{(\vx_i , \vy_i )\}_{i=1}^N$ from $p_\sigma(\vx,\vy)$\;
 \STATE Log-likelihood $\calL(\theta) = \frac{1}{N} \sum_{i=1}^N \log q_\theta(\vy_i  | \vx_i )$\;
 \STATE Update $q_\theta(\vy| \vx)$ by maximizing $\calL(\theta)$\;
 \FOR{$i = 1$ {\bfseries to} $N$}
 \IF {use \textit{sampling}}
 \STATE Sample $k'_i $ uniformly from $\{1,2,\dots,N \}$\;
 \STATE  $U_i  = \log q_\theta(\vy_i  | \vx_i ) - \log q_\theta( \vy_{k'_i }| \vx_{i}) $\; 
 \ELSE 
 \STATE $U_i  = \log q_\theta(\vy_i  | \vx_i ) - \frac{1}{N} \sum_{j=1}^N \log q_\theta( \vy_{j}| \vx_{i}) $ \;
  \ENDIF
 \ENDFOR
\STATE Update $p_\sigma(\vx,\vy)$ by minimize $\hat{\MI}_{\text{vCLUB}} = \frac{1}{N} \sum_{i=1}^N U_i $\;
 \ENDFOR
\end{algorithmic}
 \caption{MI Minimization with vCLUB}
 \end{algorithm}
% \end{minipage}

   \begin{figure*}[t]
        \centering
        \includegraphics[width = \linewidth]{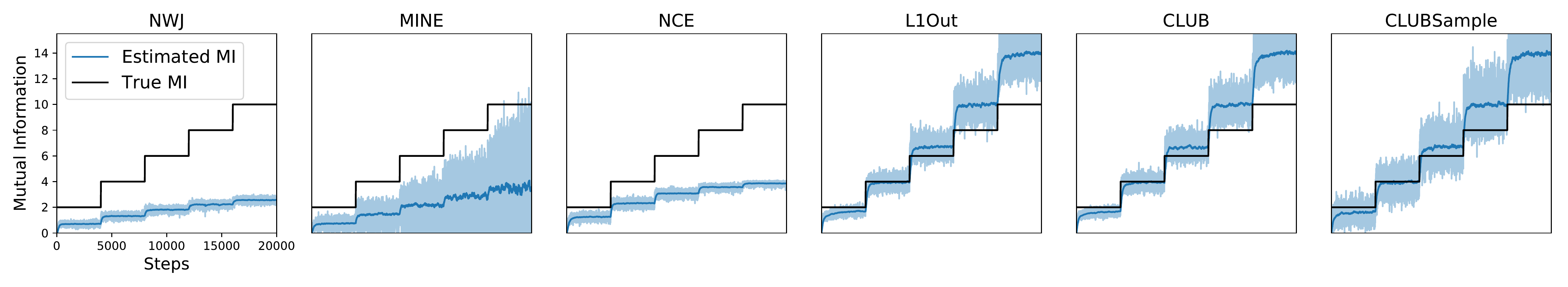}
        \includegraphics[width = \linewidth]{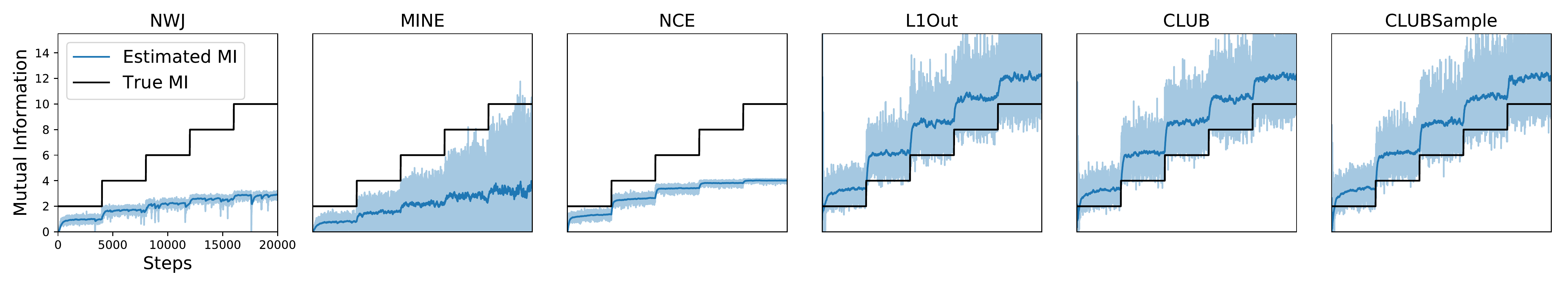}
        \vspace{-8mm}
     \caption{ Simulation performance of MI estimators. In the \textbf{top} row, data are from joint Gaussian distributions with the MI true value stepping over time. In the \textbf{bottom} row, a cubic transformation is further applied to  the Gaussian samples as $\vy$. In each figure,  the true MI values is  a step function shown as the black line. The estimated values are displayed as shadow blue curves. The dark blue curves shows the local averages of estimated MI, with a bandwidth equal to 200. }
        \label{fig:toy_MI_est}
\vspace{-3.mm}
\end{figure*}

One of the major applications of MI upper bounds is for mutual information minimization.
%The CLUB MI upper bound can be used as loss functions in MI minimization tasks. 
In general, MI minimization aims to reduce the correlation between two variables $\vx$ and $\vy$ by selecting an optimal parameter $\sigma$ of the joint variational distribution $p_\sigma(\vx, \vy)$. Under some application scenarios, additional conditional information between $\vx$ and $\vy$ is known. For example, in the information bottleneck task, the joint distribution between input $\vx$ and bottleneck representation $\vy$ is $p_\sigma(\vx, \vy) = p_\sigma(\vy | \vx) p(\vx)$. Then the MI upper bound $\MI_{\text{CLUB}}$ can be  calculated directly based on Eqn.~\eqref{eq:club-sample-est}.

For cases in which the conditional information between $\vx$ and $\vy$ remains unclear, we propose an MI minimization algorithm using the vCLUB estimator. At each training iteration, we first obtain a batch of samples $\{(\vx_i , \vy_i )\}$ from $p_\sigma(\vx , \vy)$. Then we update the variational approximation $q_\theta(\vy | \vx)$ by maximizing the log-likelihood $\calL(\theta) = \frac{1}{N} \sum_{i=1}^N \log q_\theta (\vy_i  | \vx_i )$. After  $q_\theta(\vy| \vx)$ is updated, we calculate the vCLUB estimator as described in~\eqref{eq:vclub-sample-est}. Finally, the gradient of $\hat{\MI}_{\text{vCLUB}}$ is calculated and back-propagated to parameters of $p_\sigma(\vx, \vy)$. The reparameterization trick~\citep{kingma2013auto} ensures the gradient back-propagates through the sampled embeddings $(\vx_i , \vy_i )$. Updating joint distribution $p_\sigma(\vx, \vy)$ will lead to the change of conditional distribution $p_\sigma(\vy | \vx)$. Therefore, we need to update the approximation network $q_\theta(\vy| \vx)$ again.
Consequently, $q_\theta(\vy| \vx)$ and $p_\sigma(\vx, \vy)$ are updated alternately during the training (as shown in Algorithm~1 without \textit{sampling}).

In each training iteration, the vCLUB estimator requires  calculation of all  conditional distributions $\{p_\sigma(\vy_j | \vx_i)\}_{i,j =1}^N$, which leads to $\mathcal{O}(N^2)$ computational complexity. To  further accelerate the calculate, for each positive sample pair $(\vx_i, \vy_i)$, instead of calculating the mean of the probabilities of  all negative pairs as $\frac{1}{N}\sum_{i=1}^N \log q_\theta(\vy_j | \vx_i)$ in \eqref{eq:vclub-sample-est},  we randomly sample a negative pair $(\vx_i, \vy_{k'_i})$ and use $\log q_\theta(\vy_{k'_i} | \vx_i)$ as an unbiased estimation, with $k'_i$ uniformly selected from indices $\{ 1, 2, \dots, N \}$. Then we obtain the sampled vCLUB (vCLUB-S) MI estimator:
\begin{equation}
 \hat{\MI}_{\text{vCLUB-S}}= \frac{1}{N} \sum_{i=1}^N \Big[\log q_\theta(\vy_i  |\vx_i ) -  \log q_\theta(\vy_{k'_i }| \vx_i)\Big],  \nonumber
\end{equation}
with the property of unbiasedness that $\bbE[\hat{\MI}_{\text{vCLUB-S}}] = \bbE[\hat{\MI}_{\text{vCLUB}}] = \MI_{\text{vCLUB}}(\vx;\vy)$.
By this sampling strategy, the computational complexity in each iteration can be reduced to  $\mathcal{O}(N)$ (as Algorithm 1 with \textit{sampling}). A similar sampling strategy can also be applied to CLUB when $p(\vy| \vx)$ is known.
Besides the acceleration, the vCLUB-S estimator  bridges the MI minimization with \textit{ negative sampling}, a commonly used training strategy for learning word embeddings (\textit{e.g.}, Word2Vec~\citep{mikolov2013distributed}) and node embeddings (\textit{e.g.}, Node2Vec~\citep{grover2016node2vec}), in which a positive data pair $(\vx_i, \vy_i)$ includes two nodes with an edge connection or two words in the same sentence, and  a negative pair $(\vx_i, \vy_{k'_i})$ is uniformly sampled from the whole graph or vocabulary.
%
%The mutual information is minimized by reducing the positive conditional probability, while enlarging the negative conditional probability. 
%
%
Although previous MI upper bounds also utilize the negative data pairs (such as L$\bm{1}$Out in \eqref{eq:leave-one-out}), they cannot hold an unbiased estimation when accelerated with the sampling strategy, because of the non-linear log function applied after the linear probability summation. The unbiasedness of our sampled CLUB thanks to the form of linear log-ratio summation.
 In the experiments, we find the sampled vCLUB estimator not only provides comparable MI estimation performance, but also improves the model  generalization abilities as a learning critic.
 %\pengyu{other good properties ? Robustness?}             

\vspace{-1.1mm}
\section{Experiments}\label{sec:experiments}
\vspace{-0.2mm}
 In this section, we first show the performance of CLUB as a MI estimator on tractable toy (simulated) cases, with samples drawn from Gaussian and Cubic distributions. Then we evaluate the  minimization ability of CLUB on two real-world applications: Information Bottleneck (IB) and Unsupervised Domain Adaptation (UDA). In the information bottleneck, the conditional distribution $p(\vy| \vx)$ is known, so we compare performance of both CLUB and variational CLUB (vCLUB) estimators and their sampled versions.
%
%In contrast, the toy experiments and the domain adaptation experiments mainly require the learning of variational approximation $q_\theta(\vy| \vx)$, in which we only test the performance of vCLUB with other MI bounds. 
In the  other experiments for which $p(\vy| \vx)$ is unknown, all the tested upper bounds require variational approximation. Without ambiguity, in experiments except the Information Bottleneck, we  abbreviate all variational  bounds (\textit{e.g.}, vCLUB) with their original names (\textit{e.g.}, CLUB) for simplicity.
%\subsection{Simulation Study}

\vspace{-1.mm}
\subsection{MI Estimation Quality}\label{sec:gaussian-est}
\vspace{-.5mm}
Following the setup from \citet{poole2019variational}, we apply CLUB as an MI estimator in two toy tasks: ($i$) estimating MI with samples $\{(\vx_i, \vy_i)\}$ drawn jointly from a multivariate Gaussian distribution with correlation $\rho$; ($ii$) estimating MI with samples $\{(\vx_i, (\mW\vy_i)^3)\}$, where  $(\vx_i, \vy_i)$ still comes from a Gaussian with correlation $\rho$, and $\mW$ is a full-rank matrix. Since the transformation $\vy \to (\mW \vy)^3$ is smooth and bijective, the mutual information is invariant~\citep{kraskov2004estimating}, $\MI(\vx ; \vy) = \MI(\vx; (\mW \vy)^3)$. 
For both of the tasks, the dimension of samples $\vx$ and $\vy$ is set to $d=20$. 
Under Gaussian distributions, the MI true value can be calculated as $\MI(\vx, \vy) = -\frac{d}{2} \log (1-\rho^2)$, and therefore we set the MI true value in the range $\{2.0,4.0,6.0,8.0,10.0\}$ by varying the value of $\rho$.  At each MI true value, we sample data batches 4000 times, with batch size equal to 64, for the training of variational MI estimators.

 \begin{figure}[t]
        \centering
       \includegraphics[width = 0.49\linewidth]{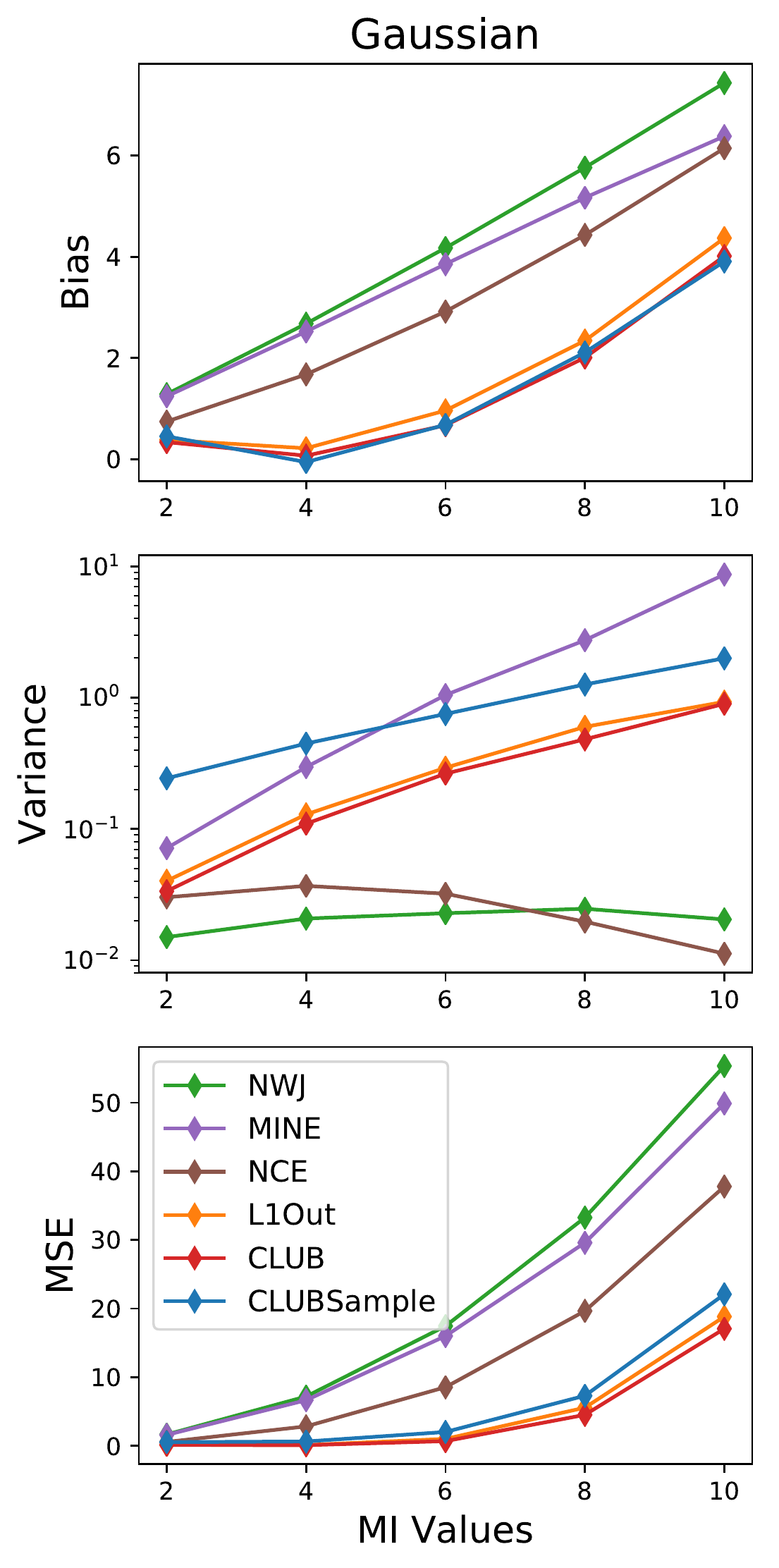} \includegraphics[width = 0.49\linewidth]{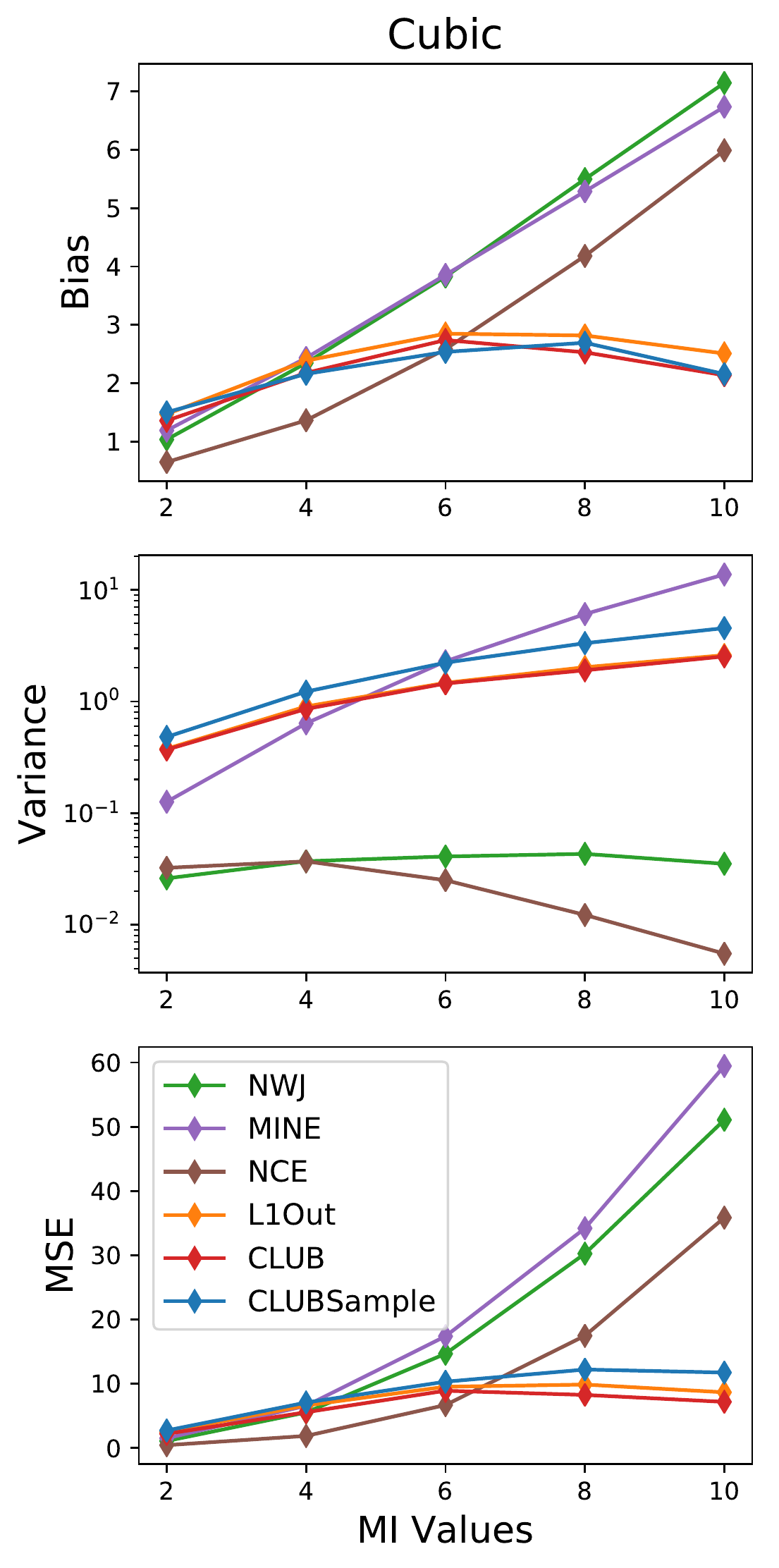}     
       \vspace{-3mm}
       \caption{ Estimation quality comparison of MI estimators. The \textbf{left} column shows the results of estimations under Gaussian distribution, while the \textbf{right} column is under Cubic setup. In each column, estimation metrics are reported as bias, variance, and mean-square-error (MSE). In each plot, the evaluation metric is reported with different true MI values varying from 2 to 10. }\label{fig:est_bias_variance}
     \vspace{-5.mm}
\end{figure}

We compare our method with baselines including MINE~\citep{belghazi2018mutual}, NWJ~\citep{nguyen2010estimating}, InfoNCE~\citep{oord2018representation}, VUB~\citep{alemi2016deep} and \loout~\citep{poole2019variational}. 
%In Section~\ref{sec:bounds}, details  to these estimators are described.
%
Since the conditional distribution $p(\vy| \vx)$ is unknown in this simulation setup, all upper bounds (VUB, \loout, CLUB) are calculated with an auxiliary approximation network $q_\theta(\vy| \vx)$. The approximation network has the same structure for all upper bounds,  parameterized in a Gaussian family, $q_\theta(\vy | \vx) = \calN(\vy | \bm{\mu}(\vx), \bm{\sigma}^2(\vx) \cdot \bm{\mathrm{I}})$ with mean $\bm{\mu}(\vx)$ and variance $\bm{\sigma}^2(\vx)$ inferred by neural networks. 
On the other hand, all the MI lower bounds (MINE, NWJ, InfoNCE) require learning of a value function $f(\vx,\vy)$. To make fair comparison, we set the value function and the neural approximation with one hidden layer and the same  hidden units. 
%For Gaussian setup, the number of hidden units is $20$; for Cubic setup, the number of hidden units is $40$. 
For both the  Gaussian and Cubic setups, the number of hidden units of our CLUB estimator is set to 15.
On the top of hidden layer outputs, we add the ReLU activation function.
The learning rate for all estimators is set to $5\times 10^{-3}$. 
%More setup details are provided in the Supplementary Material.
%
% The hidden unit size is set to 5. In the Supplementary Material, we provide a theoretical justification demonstrating that for good approximation $q_\theta(\vy| \vx)$, VUB and { \loout}  still remain MI upper bounds.  %The results of upper bounds in figure also support our theoretical founds.

We report in Figure~\ref{fig:toy_MI_est} the estimated MI values in each training step. The estimation of VUB has incomparably large bias, so we provide its results in the Supplementary Material.  Lower bound estimators, such as NWJ, MINE, and InfoNCE, provide estimated values mainly under the true MI values step function, while {\loout}, CLUB and Sampled CLUB~(CLUBSample) estimate values above the step function, which supports our theoretical analysis about CLUB with variational approximation. The numerical results of bias and variance in the estimation are reported in Figure~\ref{fig:est_bias_variance}. Among these methods, CLUB and CLUBSample have the lowest bias. The bias difference between CLUB and CLUBSample is insignificant, supporting our claim in Section~\ref{sec:MI_minimizaiton} that CLUBSample is an unbiased stochastic approximation of CLUB. {\loout} also provides small bias estimation which is slightly worse than CLUB.
NWJ and InfoNCE have the lowest variance under both setups. CLUBSample has larger variance than CLUB and {\loout} due to the use of the sampling strategy.
When considering the bias-variance trade-off as the mean square estimation error (MSE, equals bias$^2+$variance), CLUB outperforms other estimators, while {\loout} and CLUBSample also provide competitive performance. 

Although {\loout} estimator reaches similar estimation performance as our CLUB on toy examples, we find {\loout} fails to effectively reduce the MI when applied as a critic in real-world MI minimization tasks. The numerical results in   Section~\ref{sec:information-bottleneck} and Section~\ref{sec:domain-adaptation} support our claim.

%The Sampled CLUB also shows comparable estimation performance in the simulation, with much faster computational speed. \pengyu{ show speed comparison}.  

\vspace{-.8mm}
\subsection{Time Efficiency of MI Estimators}
\vspace{-.5mm}
Besides the estimation quality comparison, we 
further study the time efficiency of different MI estimators. We conduct the comparison under the same experimental setup as the Gaussian case in Section~\ref{sec:gaussian-est}. Each MI estimator is  tested with different batch size from 32 to 512. We count the total time cost of the whole estimation process and average it into each estimation step.
%
%how the negative sampling  in CLUBSample accelerates the computation, 
%
In Figure~\ref{fig:time_cost}, we report the average estimation time costs of different MI estimators. 
 MINE and CLUBSample have the highest computational efficiency; both have $\calO(N)$ computational complexity with respect to the sample size $N$, because of  the negative sampling strategy. Among other computational  $\calO(N^2)$ methods, CLUB has the highest estimation speed, thanks to its simple form as mean of log-ratios, which can be easily accelerated by matrix multiplication. Leave-one-out (L$\bm{1}$out) has the highest time cost, because it  requires ``leaving out'' the positive sample pair each time in the denominator of equation~\eqref{eq:leave-one-out}.

\begin{figure}[t!]
    \centering
    \includegraphics[width= 0.85\linewidth]{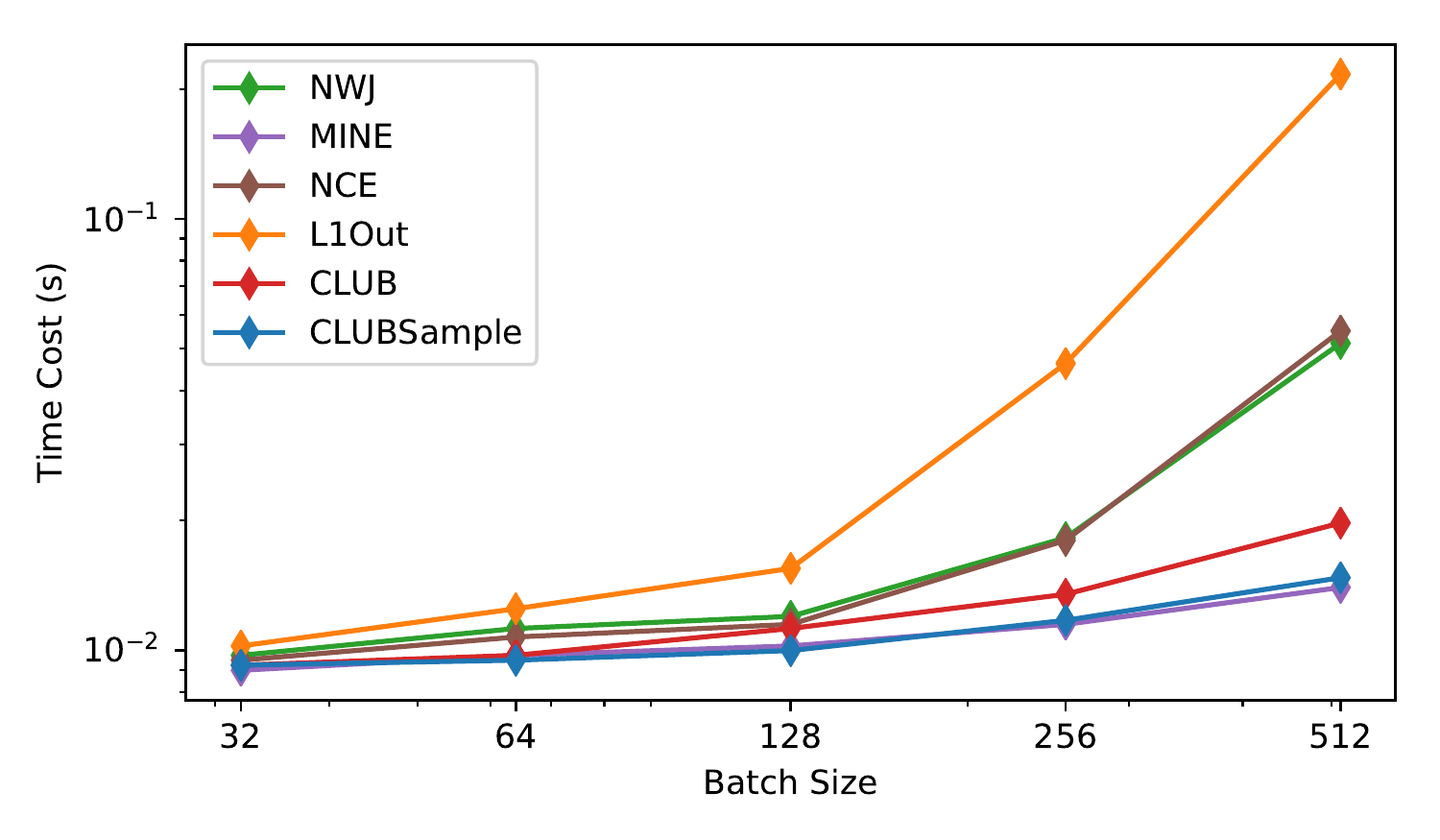}
    \vspace{-4mm}
    \caption{ Estimator speed comparison with different batch size. Both the axes have a logarithm scale.}
    \label{fig:time_cost}
    \vspace{-2.mm}
\end{figure}
\vspace{-0.6mm}
\subsection{MI Minimization in Information Bottleneck} \label{sec:information-bottleneck}
\vspace{-.2mm}

The Information Bottleneck~\citep{tishby2000information} (IB) is an information-theoretical method for  latent  representation learning. Given an input source $\vx \in \calX$ and a corresponding output target $\vy \in \calY$, the information bottleneck aims to learn an encoder $p_\sigma(\vz| \vx)$, such that the compressed latent code $\vz$ is highly relevant to the target $\vy$, with irrelevant source information from $\vx$ being filtered. In other words, IB seeks to find the sufficient statistics of $\vx$ with respect to $\vy$~\citep{alemi2016deep}, with minimum information used from $\vx$. To address this task, an objective is introduced as 
\begin{equation}\label{eq:IB-obj}
    \min_{p_\sigma(\vz| \vx)} - \MI(\vy; \vz) + \beta \MI(\vx; \vz)
\end{equation}
%\begin{equation}\label{eq:IB-obj}
  %\textstyle  \min_{p_\sigma(\vz| \vx)} - \MI(\vy; \vz) + \beta \MI(\vx; \vz),
%\end{equation}
where hyper-parameter $\beta >0$.
 Following the same setup from \citet{alemi2016deep}, we apply the IB technique in the permutation-invariant MNIST classification. The input $\vx$ is a vector converted from a $28\times 28$ image of a hand-written number, and the output $\vy$ is the class label of this number.
The stochastic encoder $p_\sigma(\vz|\vx)$ is implemented in a Gaussian variational family, $p_\sigma(\vz| \vx) =\calN(\vz| \mu_{\sigma}(\vx), \Sigma_{\sigma}(\vx)) $, where $\mu_\sigma$ and $\Sigma_\sigma$ are two fully-connected neural networks.

For the first part of the IB objective~\eqref{eq:IB-obj}, the 
MI between target $\vy$ and latent code $\vz$ is maximized. We use the same strategy as in  the deep variational information bottleneck (DVB)~\citep{alemi2016deep}, where  a variational classifier $q_\phi(\vy|\vz)$ is introduced to implement a Barber-Agakov  MI lower bound  (Eqn.~\eqref{eq:MI-lower-bound-BA}) of $\MI(\vy; \vz)$. 
The second term in the IB objective requires the MI minimization between input $\vx$ and the latent representation $\vz$. DVB~\citep{alemi2016deep} utilizes the MI variation upper bound (VUB) (Eqn.~\eqref{eq:var-upper-bound}) for the minimization of $\MI(\vx; \vz)$. 
Since the closed form of $p_\sigma(\vz| \vx)$ is already known as a Gaussian distribution parameterized by neural networks, we can directly apply our CLUB estimator for minimizing $\MI(\vx; \vz)$. Alternatively, the variational CLUB can be also applied under this scenario. Besides CLUB and vCLUB, we  compare  previous methods such as  MINE, NWJ, InfoNCE, and {\loout}. The misclassification rates for different MI estimators are reported in Table~\ref{tab:IB_MNIST}. 

\begin{table}[t]  \small
	\centering
	% \vskip 0.0in
	%\vspace{-1mm}
	% \def\arraystretch{1.0}
	% \begin{scriptsize}
	%\resizebox{\columnwidth}{!}{
	%\resizebox{0.85\columnwidth}{!}{

	\begin{tabular}{l|c}
 		\toprule%[1.2pt]
 
		\textbf{Method} & \textbf{Misclass. rate(\%)} \\
		\midrule
	%\hline
	  \textbf{NWJ} \citep{nguyen2010estimating}          &  1.29 \\
        \textbf{MINE} \citep{belghazi2018mutual}           & 1.17 \\
\textbf{InfoNCE} \citep{oord2018representation}           & 1.24 \\

        \midrule        	
        \textbf{DVB (VUB)} \citep{alemi2016deep}      & 1.13 \\
\textbf{\loout} \citep{poole2019variational}&  -    \\
    % \hline
    \midrule
        
        \textbf{CLUB}  & {1.12} \\
        \textbf{CLUB (Sample)} & 1.10 \\
        \textbf{vCLUB} & 1.10 \\
        \textbf{vCLUB (Sample)} & \textbf{1.06}\\
        %\textbf{vCLUBSample} & \\
        \bottomrule%[1.2pt]
	\end{tabular}
	\caption{ Performance on the Permutuation invariant MNIST classification.
	Different MI estimators are applied for the minimization of $\MI(\vx; \vz)$ in the Information Bottleneck. Misclassification rates of learned latent  representation $\vz$ are reported. The top three methods are MI lower bounds, while the rest are MI upper bounds. }
	\label{tab:IB_MNIST}
	\vspace{-3.5mm}
\end{table}
 MINE achieves the lowest misclassification error among lower bound estimators.
Although providing good MI estimation in the Gaussian 
simulation study, {\loout} suffers from numerical instability in MI optimization and fails during training. Both CLUB and vCLUB estimators outperform previous methods in bottleneck representation learning, with lower misclassification rates. Note that  sampled versions of CLUB and vCLUB improve the accuracy compared with original CLUB and vCLUB, respectively, which verify the claim the negative sampling strategy improves model's generalization ability.
Besides, using variational approximation  $q_\theta(\vy|\vx)$ even attains higher accuracy than using ground truth $p_\sigma(\vy|\vx)$ for CLUB. Although $p_\sigma(\vy|\vx)$ provides more accurate MI estimation,  the variational approximation $p_\sigma(\vy|\vx)$ can add noise into the gradient of CLUB. Both the sampling and the variational approximation increase the randomness in the model,
which helps to increase the model generalization ability~\citep{hinton2012improving,belghazi2018mutual}.

%\textcolor{blue}{Jiachang: Possible reason why vCLUB performs better than CLUB: vCLUB involves a neural network to estimate $p(y\mid x)$. Because the neural network only gives an approximate answer, this introduces additional noise into the gradients, which could explain why vCLUB can get better results. However, during experiments, we observe that CLUB is more stable than vCLUB.}

%\vspace{-1.5mm}
\subsection{MI Minimization in Domain Adaptation}\label{sec:domain-adaptation}
%\vspace{-1.5mm}

Another important application of MI minimization is disentangled representation learning (DRL)~\citep{kim2018disentangling,chen2018isolating,locatello2019challenging}. Specifically, we aim to encode the data into several separate embedding parts, each with different semantic meanings. The semantically disentangled representations help improve the performance of deep learning models, especially in the fields of conditional generation~\citep{ma2018disentangled}, style transfer~\citep{john2019disentangled}, and domain adaptation~\citep{gholami2018unsupervised}.
To learn (ideally) independent disentangled representations, one  effective solution is to minimize the mutual information among different latent embedding parts.

\begin{figure}[t]
    \centering
    \includegraphics[width=0.9\linewidth]{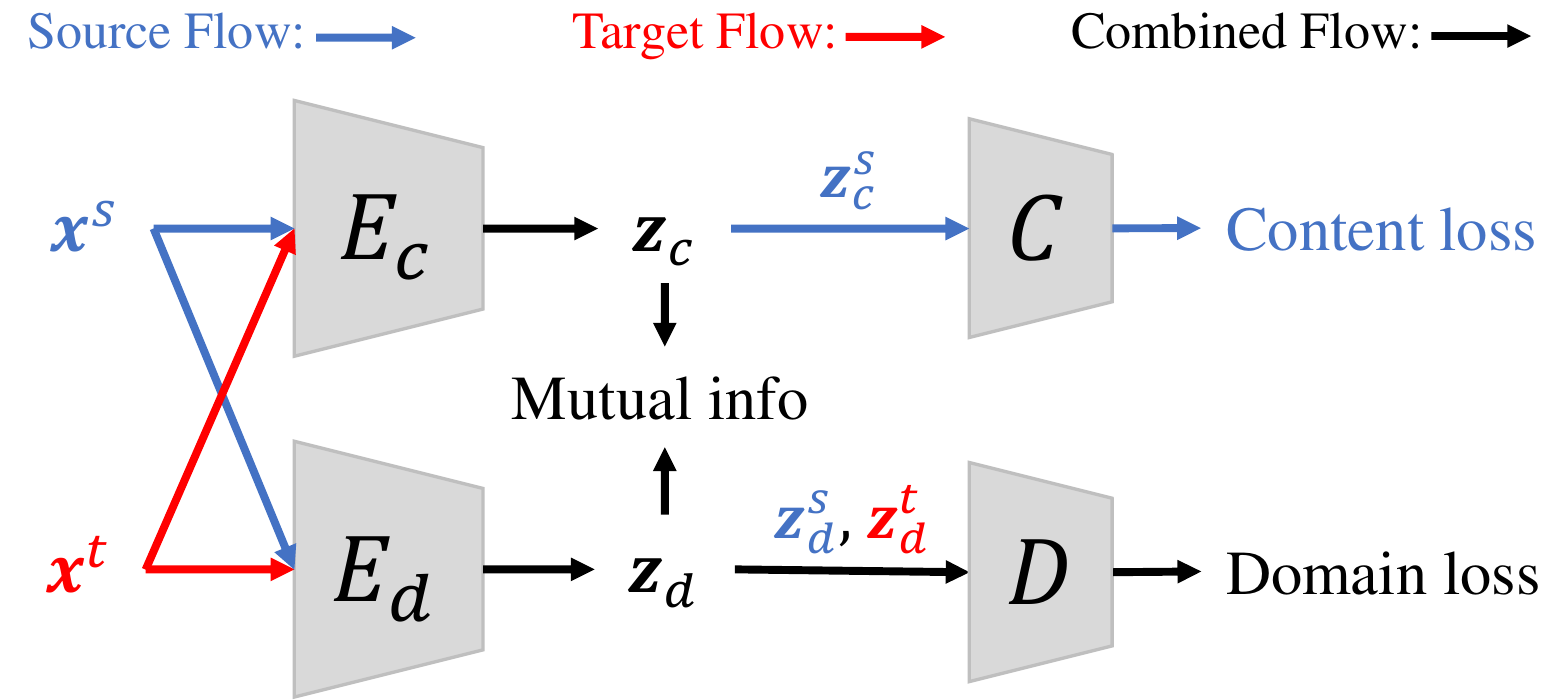}
    \vspace{-1mm}
    \caption{  The information-theoretical framework for unsupervised domain adaptation. The input data $\vx$ (including $\vx^s$ and $\vx^t$) are passed to a content encoder $E_c$ and a domain encoder $E_d$, with output feature $\vz_c$ and $\vz_d$, respectively. $C$ is the content classifier, and $D$ is the domain discriminator. The mutual information between $\vz_c$ and $\vz_d$ is minimized.}
    \label{fig:da_framework}
    \vspace{-2mm}
\end{figure}

We compare performance of MI estimators for learning disentangled representations in unsupervised domain adaptation (UDA) tasks.
In UDA, we have images $\vx^s \in \calX^s$  from the source domain $\calX^s$  and $\vx^t \in \calX^t$ from the target domain $\calX^t$. While each source image $\vx^s$ has a corresponding label $y^s$, no label information is available for observations in the target domain. The objective is to learn a model based on data $\{\vx^s, y^s \}$ and $\{\vx^t\}$, which not only performs well in source domain classification, but also provides satisfying predictions in the target domain.

To solve this problem, we use the information-theoretical framework inspired from \citet{gholami2018unsupervised}. Specifically, two feature extractors are introduced: the domain encoder $E_d$ and the content encoder $E_c$. The former encodes the domain information from an observation $\vx$ into a domain embedding $\vz_d = E_d(\vx)$; the latter outputs a content embedding $\vz_c = E_c(\vx)$ based on an input data point $\vx$. As shown in Figure~\ref{fig:da_framework}, the content embedding $\vz_c^s$ from the source domain is further used as an input to a content classifier $C(\cdot)$ to predict the corresponding class label, with a content loss defined as
$\mathcal{L}_c=\mathbb{E}[-y^s\log C(\vz_c^s)]$. The domain embedding $\vz_d$ (including $\vz_d^s$ and $\vz_d^t$) is input to a domain discriminator $D(\cdot)$ to predict whether the observation comes from the source domain or target domain, with a domain loss defined as $\mathcal{L}_d=\mathbb{E}_{\vx \in \calX^s}[\log D(\vz_d)]+\mathbb{E}_{\vx \in \calX^t}[\log (1-D(\vz_d))]$. Since the content information and the domain information should be independent, we minimize the mutual information $\MI(\vz_c, \vz_d)$ between the content embedding $\vz_c$ and domain embedding $\vz_d$. The final objective is (shown in Figure~\ref{fig:da_framework}): 
\begin{equation}\label{eq:UDA_obj}
    \min_{E_c, E_d, C, D}\MI(\vz_c, \vz_d) + \lambda_c \mathcal{L}_c + \lambda_d \mathcal{L}_d,
\end{equation}
% \begin{equation}
%     \min_{E_c, C}\mathcal{L}_c+\lambda\cdot \MI(\vz_c, \vz_d),\; \text{and} \min_{E_d, D}\mathcal{L}_d,
% \end{equation}
%
% \begin{equation}
%     \min_{E_c, E_d, C, D}\MI(\vz_c, \vz_d) + \lambda_c \mathcal{L}_c + \lambda_d \mathcal{L}_d,
% \end{equation}
%
where $\lambda_c, \lambda_d > 0$ are hyper-parameters. 
%We compare this MI-based DA framework with previous methods such as MMD, DANN, DSN, and MCD.\pengyu{add citation}

\setlength{\tabcolsep}{2.4pt}
\begin{table}[t]\small
    \centering
    \begin{tabular}{l|cccccc}
        \toprule
        \textbf{Method} & M$\rightarrow$MM & M$\rightarrow$U & U$\rightarrow$M & SV$\rightarrow$M & C$\to$S & S$\to$C\\
        \midrule
        \textbf{Source-Only} & 59.9 & 76.7 & 63.4 & 67.1 & - & -\\
        \midrule
        \multicolumn{7}{c}{MI-based Disentangling Framework} \\
        \midrule
        \textbf{NWJ} & 83.3 & 98.3 & 91.1 & 86.5& 78.2 & 71.0\\
        \textbf{MINE} & 88.4 & 98.1 & 94.8 & 83.4 &77.9 &70.5 \\
        \textbf{InfoNCE} & 85.5 & 98.3 & 92.7 & 84.1 &77.4&69.4 \\
        \midrule
        \textbf{VUB} & 76.4 & 97.1 & 96.3 & 81.5& -&-\\
        \textbf{L1Out} & 76.2 & 96.3 & 93.9 & - &77.8& 69.2 \\
        \textbf{CLUB} & 93.7 & \textbf{98.9} & 97.7 & 89.7 &78.7 & 71.8 \\
        \textbf{CLUB-S} & \textbf{94.6} & \textbf{98.9} & \textbf{98.1} & \textbf{90.6} & \textbf{79.1} &\textbf{72.3}\\
        \midrule
        \multicolumn{7}{c}{Other Frameworks} \\
        \midrule
        % \textbf{MMD} & 76.9 & - & - & 71.1 &- & - \\
        \textbf{DANN} & 81.5 & 77.1 & 73.0 & 71.1&- &- \\
        \textbf{DSN}  & 83.2 & 91.3 & - & 76.0 & - &-\\
        \textbf{MCD}  & 93.5 & 94.2 & 94.1 &  \textbf{92.6} & 78.1 & 69.2 \nocite{dai2019contrastively} \\
        \bottomrule
    \end{tabular}
    \caption{ Performance comparison on UDA. Datasets are MNIST (M), MNIST-M (MM), USPS (U),  SVHN (SV), CIFAR-10 (C), and STL (S). Classification accuracy on target domain is reported. Among results in MI-based disentangling framework, the top three  are MI lower bounds, while the rest are MI upper bounds. CLUB-S refers to Sampled CLUB.}
    \label{tab:domain_adapt}
  \vspace{-3.5mm}
\end{table}

We apply different MI estimators to the framework~\eqref{eq:UDA_obj}, and evaluate the performance on several DA benchmark datasets, including MNIST, MNIST-M, USPS, SVHN, CIFAR-10, and STL. Detailed description to the datasets and model setups is in the Supplementary Material. Besides the proposed information-theoretical UDA model, we also compare the performance with other UDA frameworks:   DANN~\cite{ganin2016domain}, DSN~\cite{bousmalis2016domain}, and MCD~\cite{saito2017maximum}.
The numerical results are shown in Table~\ref{tab:domain_adapt}. 
From the results, we find our MI-based disentangling shows competitive results with previous UDA methods. Among different MI estimators, the Sampled CLUB uniformly outperforms other competitive methods on four DA tasks. The stochastic sampling in CLUBSample improves the model generalization ability and preserves the model from overfitting. The other two MI upper bounds, VUB and {\loout}, fail to train a satisfying UDA model, whose results are  worse than the MI lower bound estimators. With {\loout}, the training loss cannot even decrease on the most challenging SVHN$\to$MNIST task, due to the numerical instability.
%
%Similarly, we could also derive another MI upper bound based on the conditional distribution $p(\vx | \vy)$. However, the neural network approximation to $p(\vx | \vy)$ would have worse performance relative to the approximation to $p(\vy| \vx)$. This is due to the fact that the dimension of $\vx$ is much higher than the dimension of $\vy$, so that $p(\vx| \vy)$ is a higher-dimensional distribution. Alternatively, the lower-dimensional distribution $p(\vy| \vx)$ used in our model is relatively easy to approximate with neural networks.

% In the unusual situation where you want a paper to appear in the
% references without citing it in the main text, use \nocite

\vspace{-1.5mm}
\section{Conclusions}
\vspace{-.5mm}
We have introduced a novel mutual information upper bound called Contrastive Log-ratio Upper Bound (CLUB). This novel MI estimator can be extended to a variational version for general scenarios when only samples of the joint distribution are obtainable.
Based on the variational CLUB, we have proposed a  new MI  minimization algorithm, and further accelerated it with a negative sampling strategy.
We have studied the good  properties of CLUB both theoretically and empirically.  Experimental results on simulation studies and real-world applications show the attractive performance of CLUB  on both MI estimation and MI minimization tasks. This work provides an insight on the connection between mutual information and widespread machine learning training strategies, including contrastive learning and negative sampling. We believe
the proposed CLUB estimator will have vast applications for reducing the correlation of different model parts, especially in the domains of interpretable machine learning, controllable generation, and fairness. 

%Moreover, the CLUB MI estimator bridges the gap between MI minimization and contrastive learning. In future work, we plan to use this upper bound for other applications involving MI minimization, such as conditional generation, style transfer, and fairness. 

\section*{Acknowledgements} 
Thanks to Dongruo Zhou from UCLA for helpful discussions on network expressiveness. The portion of this work performed at Duke University was supported in part by DARPA, DOE, NIH, NSF and ONR.

\bibliography{references}
\bibliographystyle{icml2020}

%%%%%%%%%%%%%%%%%%%%%%%%%%%%%%%%%%%%%%%%%%%%%%%%%%%%%%%%%%%%%%%%%%%%%%%%%%%%%%%
%%%%%%%%%%%%%%%%%%%%%%%%%%%%%%%%%%%%%%%%%%%%%%%%%%%%%%%%%%%%%%%%%%%%%%%%%%%%%%%
% DELETE THIS PART. DO NOT PLACE CONTENT AFTER THE REFERENCES!
%%%%%%%%%%%%%%%%%%%%%%%%%%%%%%%%%%%%%%%%%%%%%%%%%%%%%%%%%%%%%%%%%%%%%%%%%%%%%%%
%%%%%%%%%%%%%%%%%%%%%%%%%%%%%%%%%%%%%%%%%%%%%%%%%%%%%%%%%%%%%%%%%%%%%%%%%%%%%%%
\clearpage
\appendix
\onecolumn
\section{Proofs of Theorems}
\begin{proof}[Proof of Theorem~\ref{thm:var-club-general}]
We calculate the  gap between $\MI_{\text{vCLUB}}$ and $\MI(\vx ; \vy)$:
\begin{align*}
    \tilde{\Delta} := &\MI_{{\text{vCLUB}}} (\vx;\vy) - \MI(\vx; \vy)\nonumber \\
    = &  \bbE_{p(\vx,\vy)} [\log q_\theta (\vy|\vx)]  - \bbE_{p(\vx)}\bbE_{p(\vy)} [\log q_\theta (\vy| \vx)]   -  \bbE_{p(\vx, \vy)} \left[\log {p(\vy| \vx)} - \log{p(\vy)}\right] \nonumber \\
    = & \left[\bbE_{p(\vy)}[\log p(\vy)] - \bbE_{p(\vx) p(\vy)} [\log q_\theta(\vy| \vx)]\right] - \left[\bbE_{p(\vx, \vy)}[\log p(\vy| \vx)] - \bbE_{p(\vx, \vy)}[\log q_\theta(\vy| \vx)]\right] \\
    =& \bbE_{p(\vx)p(\vy)}[\log \frac{p(\vy)}{q_\theta (\vy| \vx)}] - \bbE_{p(\vx, \vy)} [\log \frac{p(\vy| \vx)}{q_\theta (\vy| \vx)}] \\
    = & \bbE_{p(\vx)p(\vy)}[\log \frac{p(\vx)p(\vy)}{q_\theta (\vy| \vx)p(\vx)}] - \bbE_{p(\vx, \vy)} [\log \frac{p(\vy| \vx)p(\vx)}{q_\theta (\vy| \vx)p(\vx)}]  \\
    = & \KL(p(\vx) p(\vy) \Vert q_\theta (\vx,\vy)) - \KL(p(\vx, \vy) \Vert q_\theta(\vx, \vy)).
\end{align*}
Therefore, $\vCLUB(\vx; \vy)$ is an upper bound of $\MI(\vx; \vy)$ if and only if $\KL(p(\vx) p(\vy) \Vert q_\theta (\vx,\vy)) \geq \KL(p(\vx, \vy) \Vert q_\theta(\vx, \vy))$.

If $\vx$ and $\vy$ are independent, $p(\vx) p(\vy) =p(\vx, \vy)$. Then, $\KL(p(\vx) p(\vy) \Vert q_\theta (\vx,\vy)) = \KL(p(\vx, \vy) \Vert q_\theta(\vx, \vy))$ and $\tilde{\Delta} = 0$. Therefore, $\vCLUB(\vx;\vy) = \MI(\vx; \vy)$, the equality holds.
\end{proof}

\begin{proof}[Proof of Corollary~\ref{thm:club-as-estimation}]
If $\KL(p(\vy| \vx) \Vert q_\theta(\vy| \vx)) \leq \epsilon$, then 
\begin{equation*}
    \KL(p(\vx, \vy) \Vert q_\theta(\vx, \vy)) = \bbE_{p(\vx,\vy)} [\log \frac{p(\vx,\vy)}{q_\theta(\vx,\vy)}] = \bbE_{p(\vx,\vy)} [\log \frac{p(\vy| \vx)}{q_\theta(\vy| \vx)}] = \KL(p(\vy| \vx) \Vert q_\theta(\vy| \vx)) \leq \epsilon.
\end{equation*}
By the condition $\KL(p(\vx, \vy) \Vert q_\theta(\vx,\vy) > \KL(p(\vx) p(\vy) \Vert q_\theta(\vx, \vy))$, we have $\KL(p(\vx) p(\vy) \Vert q_\theta(\vx, \vy))< \varepsilon$.

Note that the KL-divergence is always non-negative. From the proof of Theorem~\ref{thm:var-club-general}, 
\begin{align}
    \left|\vCLUB(\vx;\vy) - \MI(\vx;\vy)\right| =& \left| \KL(p(\vx) p(\vy) \Vert q_\theta (\vx,\vy)) - \KL(p(\vx, \vy) \Vert q_\theta(\vx, \vy)) \right| \nonumber \\ <& \max \left\{ \KL(p(\vx) p(\vy) \Vert q_\theta (\vx,\vy)), \KL(p(\vx, \vy) \Vert q_\theta(\vx, \vy)) \right\}  \leq \varepsilon,\nonumber 
\end{align}
which supports the claim.
\end{proof}

\section{Network Expressiveness in Variational Inference}\label{sec:neural-expressiveness}
In Section~\ref{sec:vCLUB}, when analyze the properties of the vCLUB estimator, we claim a reasonable assumption that with high expressiveness of the neural network $q_\theta(\vy|\vx)$, we can achieve $\KL(p(\vy| \vx) \Vert q_\theta (\vy| \vx)) < \varepsilon$. Here we provide a analysis under the scenario that the conditional distribution is  a Gaussian distribution, $p(\vy | \vx) = \calN(\vmu^*(\vx), \bm{\mathrm{I}})$. The variational approximation $q_\theta(\vy|\vx)$ is parameterized by $q_\theta(\vy| \vx) = \calN(\vmu_\theta(\vx), \bm{\mathrm{I}})$.

Then training samples pair $(\vx_i, \vy_i)$  can be treated as $(\vx_i, \vmu^*(\vx_i) + \bm{\xi}_i)$, where $\bm{\xi}_i \sim \calN(\bm{0}, \bm{\mathrm{I}})$. Then 
\begin{align*}
\log p(\vy| \vx) =&\log \prod_{d=1}^D[ \frac{1}{\sqrt{2 \pi}} e^{(y^{(d)} - \mu*^{(d)}(\vx))^2/2}] = -\frac{D}{2} \log (2\pi) - \frac{1}{2} \Vert \vy - \vmu^*(\vx)\Vert^2 , \nonumber\\
\log q_\theta(\vy| \vx) =&\log \prod_{d=1}^D[ \frac{1}{\sqrt{2 \pi}} e^{(y^{(d)} - \mu_\theta^{(d)}(\vx))^2/2}] = -\frac{D}{2} \log (2\pi) - \frac{1}{2} \Vert \vy - \vmu_\theta(\vx)\Vert^2 .
\end{align*}

The log-ratio between $p(\vy_i| \vx_i)$ and $q_\theta(\vy_i|\vx_i)$ is \begin{equation*}
    \log \frac{p(\vy_i|\vx_i)}{q_\theta(\vy_i| \vx_i)} = \log p(\vy_i| \vx_i) - \log q_\theta(\vy_i|\vx_i)  = [\vmu^*(\vx_i) - \vmu_\theta(\vx_i)]^T [\vy_i - \vmu_\theta(\vx_i) + \bm{\xi}_i]. 
\end{equation*}
We further assume $\Vert \vmu^*(\vx) - \vmu_\theta(\vx)\Vert < A$ is bounded. Then $|{\log p(\vy_i| \vx_i) } -{ \log q_\theta(\vy_i | \vx_i)}| < A \Vert \vy_i - \vmu_\theta(\vx_i) + \bm{\xi}_i \Vert$.

Denote a loss function $l(\vmu_\theta(\vx_i), \vy_i)= \Vert \vy_i - \vmu_\theta(\vx_i) + \bm{\xi}_i \Vert$. With all reasonable assumptions in \citet{hu2019understanding}, and applying the Theorem~5.1 in \citet{hu2019understanding}, we know that when the number of samples $n \to \infty$, the expected error $\bbE_{p(\vx, \vy)} [l(\vmu_\theta(\vx),\vy)] \to \infty$ with probability $1-\delta$.
\begin{equation*}
\KL(p(\vy| \vx) \Vert q_\theta(\vy| \vx)) = \bbE_{p(\vx,\vy)}[\log p(\vy| \vx) - \log q_\theta(\vy| \vx)]  < A \cdot \bbE_{p(\vx,\vy)} [l(\vmu_\theta(\vx) , \vy)].
\end{equation*}
 Therefore, when given a small number $\varepsilon >0$, having  the sample size $n$ large enough, we can guarantee that $\KL(p(\vy| \vx) \Vert q_\theta(\vy| \vx)) $ is smaller than $\varepsilon$.

\section{Properties of Variational Upper Bounds }
In the Section~\ref{sec:bounds}, we introduce two variational MI upper bounds with neural network approximation $q_\theta(\vy| \vx)$ to $p(\vy| \vx)$:
\begin{align*}
  \textstyle  \MI_{\text{vVUB}}(\vx; \vy) &= \bbE_{p(\vx,\vy)} \left[\log \frac{q_\theta(\vy|\vx)}{r(\vy)}\right], \\
\textstyle    \MI_{\text{v\loout}} (\vx; \vy)& = \bbE \left[ \frac{1}{N} \sum_{i=1}^N \left[\log \frac{q_\theta(\vy_i | \vx_i)}{\frac{1}{N-1} \sum_{j \neq i} q_\theta(\vy_i | \vx_j)}\right]\right].
\end{align*}
With the neural approximation $q_\theta(\vy| \vx)$, $\MI_{\text{vVUB}}$ and $\MI_{\text{v\loout}}$ no longer guarantee to be the MI upper bounds. However, both of the two estimators have good properties with  a good approximation $q_\theta(\vy| \vx)$.

\begin{theorem}\label{thm:vVUB}
If  $q_\theta(\vy|\vx)$ satisfies $\text{KL}(p(\vy| \vx) \Vert q_\theta(\vy| \vx)) \leq  \text{KL}(p(\vy) \Vert r(\vy)),$ then $\MI(\vx; \vy) \leq \MI_{\text{vVUB}}(\vx; \vy)$. 
\vspace{-2mm}
\end{theorem}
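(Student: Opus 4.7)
The plan is to compute the gap $\tilde\Delta_{\text{vVUB}} := \MI_{\text{vVUB}}(\vx;\vy) - \MI(\vx;\vy)$ directly and rewrite it as a difference of two KL divergences, mirroring the manipulation used in the proof of Theorem~\ref{thm:var-club-general}. Starting from the definitions,
\begin{equation*}
\tilde\Delta_{\text{vVUB}} = \bbE_{p(\vx,\vy)}\left[\log \frac{q_\theta(\vy|\vx)}{r(\vy)}\right] - \bbE_{p(\vx,\vy)}\left[\log \frac{p(\vy|\vx)}{p(\vy)}\right],
\end{equation*}
I would regroup the four log terms into $\log[p(\vy)/r(\vy)] - \log[p(\vy|\vx)/q_\theta(\vy|\vx)]$ so that each piece can be recognized as a KL divergence.

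Next I would simplify each of the two expectations separately. The first integrand depends only on $\vy$, so marginalizing out $\vx$ immediately yields $\KL(p(\vy)\Vert r(\vy))$. The second, by writing $p(\vx,\vy) = p(\vx)\,p(\vy|\vx)$ and integrating $\vy$ before $\vx$, becomes the expected conditional KL divergence $\bbE_{p(\vx)}[\KL(p(\vy|\vx)\Vert q_\theta(\vy|\vx))]$, which is precisely what the theorem statement abbreviates as $\text{KL}(p(\vy|\vx)\Vert q_\theta(\vy|\vx))$. Assembling the pieces gives
\begin{equation*}
\tilde\Delta_{\text{vVUB}} = \KL(p(\vy)\Vert r(\vy)) - \KL(p(\vy|\vx)\Vert q_\theta(\vy|\vx)),
\end{equation*}
and the hypothesis of the theorem says exactly that this right-hand side is non-negative, so $\MI(\vx;\vy) \leq \MI_{\text{vVUB}}(\vx;\vy)$.

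I do not anticipate a significant obstacle here. The only subtlety is keeping track of which measure each expectation is taken under, so that the second expectation collapses into a conditional-KL averaged over $p(\vx)$ rather than something else. In contrast to Theorem~\ref{thm:var-club-general}, where the gap is a difference of two joint KLs that must be compared by assumption, here one term of the gap is fixed by the choice of variational marginal $r(\vy)$ and the other by the variational conditional $q_\theta(\vy|\vx)$, so the hypothesis is precisely the inequality one needs and the conclusion follows with no further work.
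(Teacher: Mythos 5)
Your proof is correct and is essentially the same as the paper's: both decompose the difference between $\MI_{\text{vVUB}}$ and $\MI(\vx;\vy)$ into $\KL(p(\vy)\Vert r(\vy)) - \KL(p(\vy|\vx)\Vert q_\theta(\vy|\vx))$ and invoke the hypothesis directly. The only cosmetic difference is that you compute the gap $\MI_{\text{vVUB}} - \MI$ head-on, while the paper inserts the telescoping ratio $\tfrac{p(\vy|\vx)}{q_\theta(\vy|\vx)}\cdot\tfrac{q_\theta(\vy|\vx)}{r(\vy)}\cdot\tfrac{r(\vy)}{p(\vy)}$ inside the expectation defining $\MI$; the resulting identity is the same.
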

\begin{proof}[Proof of Theorem~\ref{thm:vVUB}]
With the conditional  $\text{KL}(p(\vy| \vx) \Vert q_\theta(\vy| \vx)) \leq  \text{KL}(p(\vy) \Vert r(\vy)),$
\begin{align*}
    \MI(\vx; \vy) = & \bbE_{p(\vx, \vy)} \left[\log \frac{p(\vy| \vx)}{p(\vy)}\right] 
    = \bbE_{p(\vx,\vy)} \left[ \log \left( \frac{p(\vy|\vx)}{q_\theta(\vy|\vx)} \cdot \frac{q_\theta(\vy|\vx)}{r(\vy)} \cdot \frac{r(\vy)}{p(\vy)} \right) \right] \\
    = &
    \bbE_{p(\vx,\vy)} \left[\log \frac{q_\theta(\vy|\vx)}{r(\vy)}\right] + \text{KL}(p(\vy| \vx) \Vert q_\theta(\vy| \vx)) -  \text{KL}(p(\vy) \Vert r(\vy))  \leq   \bbE_{p(\vx,\vy)} \left[\log \frac{q_\theta(\vy|\vx)}{r(\vy)}\right].
\end{align*}
\vspace{-4mm}
\end{proof}
\begin{theorem}\label{thm:vvCLUB}
Given $N-1$ samples $\vx_1,\vx_2,\dots,\vx_{N-1}$ from the marginal  $p(\vx)$,
If 
\begin{equation*}
    \text{KL}(p(\vy| \vx) \Vert  q_\theta(\vy| \vx)) \leq \bbE_{\vx_i \sim p(\vx)} \left[ \text{KL}\left(p(\vy) \Vert \frac{1}{N-1}\sum_{i=1}^{N-1} q_\theta(\vy| \vx_i)\right)\right],
\end{equation*} then $\MI(\vx; \vy) \leq \MI_{\text{v\loout}}(\vx; \vy)$. 
\vspace{-2mm}
\end{theorem}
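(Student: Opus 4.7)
The plan is to compute the gap
\[
\tilde\Delta := \MI_{\text{v\loout}}(\vx;\vy) - \MI(\vx;\vy)
\]
and show it is non-negative under the stated hypothesis, imitating the strategy used in the proof of Theorem~\ref{thm:vVUB}. The stated inequality is then just the assertion that one expected KL dominates the other.

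First I would simplify $\MI_{\text{v\loout}}$. Since the $N$ pairs are i.i.d.\ from $p(\vx,\vy)$, the indices $i=1,\ldots,N$ are exchangeable and all $N$ inner terms share the same expectation. Splitting a single term into its log-numerator and log-denominator pieces, the numerator contributes $\bbE_{p(\vx,\vy)}[\log q_\theta(\vy|\vx)]$. For the denominator piece, the samples $\{\vx_j\}_{j\neq i}$ are independent of $(\vx_i,\vy_i)$; integrating out $\vx_i$ first leaves $\vy_i$ marginally distributed as $p(\vy)$. By Fubini this yields
\[
\MI_{\text{v\loout}}(\vx;\vy) = \bbE_{p(\vx,\vy)}[\log q_\theta(\vy|\vx)] - \bbE_{\vx_1,\ldots,\vx_{N-1}\sim p(\vx)}\,\bbE_{p(\vy)}\!\left[\log \tfrac{1}{N-1}\sum_{j=1}^{N-1} q_\theta(\vy|\vx_j)\right].
\]

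Second, using the identity $\MI(\vx;\vy) = \bbE_{p(\vx,\vy)}[\log p(\vy|\vx)] - \bbE_{p(\vy)}[\log p(\vy)]$ and subtracting, the four terms regroup naturally into two KL divergences:
\[
\tilde\Delta = -\,\bbE_{p(\vx)}\!\bigl[\text{KL}(p(\vy|\vx)\,\Vert\,q_\theta(\vy|\vx))\bigr] + \bbE_{\vx_1,\ldots,\vx_{N-1}}\!\left[\text{KL}\!\left(p(\vy)\,\Big\Vert\,\tfrac{1}{N-1}\sum_{j=1}^{N-1} q_\theta(\vy|\vx_j)\right)\right].
\]
The hypothesis is precisely the statement that the second expected KL dominates the first, so $\tilde\Delta \geq 0$ and the desired bound $\MI(\vx;\vy)\leq \MI_{\text{v\loout}}(\vx;\vy)$ follows.

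The main technical care, which will not be hard but is worth writing out, is in the first step: checking that the joint distribution of $(\vy_i, \{\vx_j\}_{j\neq i})$ factors as $p(\vy)\prod_{j\neq i}p(\vx_j)$, despite $\vy_i$ having originally been sampled jointly with $\vx_i$. This is where the leave-one-out construction is essential, as it ensures the denominator never involves $\vx_i$. A minor notational point is that, matching the convention already used in the statement and proof of Theorem~\ref{thm:vVUB}, the term $\text{KL}(p(\vy|\vx)\Vert q_\theta(\vy|\vx))$ should be read as the expectation of this conditional KL under $p(\vx)$, which is exactly what the gap computation produces.
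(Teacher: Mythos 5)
Your proof is correct and follows essentially the same route as the paper's: both decompose the gap $\MI_{\text{v\loout}}(\vx;\vy)-\MI(\vx;\vy)$ by inserting the chain of log-ratios $\tfrac{p(\vy|\vx)}{q_\theta(\vy|\vx)}\cdot\tfrac{q_\theta(\vy|\vx)}{\text{mixture}}\cdot\tfrac{\text{mixture}}{p(\vy)}$ and regrouping into the two KL divergences that appear in the hypothesis, then invoke exchangeability so that the expected leave-one-out KL matches the stated condition. Your explicit remark that the leave-one-out construction guarantees $\vy_i$ is independent of $\{\vx_j\}_{j\neq i}$ (so that $\vy_i$ is marginally $p(\vy)$ there) is the right technical caveat, and it makes explicit a step the paper handles tersely.
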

\begin{proof}
Assume we have $N$ sample pairs $\{ (\vx_i,\vy_i)\}_{i=1}^N$ drawn from $p(\vx,\vy)$, then
\begin{align*}
&\MI(\vx; \vy) =\bbE_{(\vx_i, \vy_i) \sim p(\vx,\vy)} \left[ \frac{1}{N} \sum_{i=1}^N \left[\log \frac{p(\vy_i | \vx_i)}{p(\vy_i)}\right]\right]\\   
=&\bbE_{(\vx_i, \vy_i) \sim p(\vx,\vy)} \left[ \frac{1}{N} \sum_{i=1}^N \left[\log \left(\frac{p(\vy_i | \vx_i)}{q_\theta(\vy_i | \vx_i)} \cdot \frac{q_\theta(\vy_i|\vx_i)}{\frac{1}{N-1} \sum_{j \neq i} q_\theta(\vy_i | \vx_j)} \cdot \frac{\frac{1}{N-1} \sum_{j \neq i} q_\theta(\vy_i | \vx_j)}{p(\vy_i)}\right) \right]\right] \\
 =& \KL(p(\vy|\vx)\Vert q_\theta(\vy|\vx)) + \MI_{\text{vVUB}}(\vx; \vy) - \bbE \left[ \frac{1}{N} \sum_{i=1}^N \KL\left(p(\vy) \Vert\frac{1}{N-1} \sum_{j \neq i} q_\theta(\vy | \vx_j)\right)  \right].    
\end{align*}
Apply the condition in Theorem~\ref{thm:vvCLUB} to each $N-1$ combination of $\{\vx_j\}_{j \neq i}$, we conclude $\MI(\vx;\vy) \leq \MI_{\text{v\loout}}(\vx; \vy)$.
\vspace{-2mm}
\end{proof}
Theorem~\ref{thm:vVUB} and Theorem~\ref{thm:vvCLUB} indicate that if the approximation $q_\theta(\vy| \vx)$ is good enough, the estimators $\MI_{\text{vVUB}}$ and $\MI_{\text{v\loout}}$ can remain as MI upper bounds. Based on the analysis in Section~\ref{sec:neural-expressiveness}, when implemented with neural networks, the approximation can be far more accurate to preserve the variational estimators as MI upper bounds.

\section{Implementation Details}
\textbf{vCLUB with Gaussian Approximation}
When $q_\theta(\vy| \vx)$ is parameterized by $\calN (\vy | \vmu(\vx), \vsigma^2(\vx) \cdot \mathbf{I})$, then given samples $\{(\vx_i, \vy_i)\}_{i=1}^N$, we denote $\vmu_i = \vmu(\vx_i)$ and $\vsigma_i = \vsigma(\vx_i)$. Moreover, $\vmu_i = [\mu_i^{(1)}, \mu_i^{(2)}, \dots, \mu_i^{(D)}]^{\mathrm{T}}$, $\vsigma_i = [\sigma_i^{(1)}, \sigma_i^{(2)}, \dots, \sigma_i^{(D)}]^{\mathrm{T}}$, are $D$-dimensional vectors as $\vy_i = [y_i^{(1)}, y_i^{(2)}, \dots, y_i^{(D)}]^\mathrm{T}$. Then the conditional distribution
\begin{equation}
    q_\theta (\vy_j | \vx_i) =
    \prod_{d = 1}^D  (2 \pi (\sigma^{(d)}_i)^2)^{-1/2} \exp \left\{ -\frac{(y^{(d)}_j - \mu^{(d)}_i)^2}{2 (\sigma^{(d)}_i)^2 }\right\}.
\end{equation}
Therefore, the log-ratio 
\begin{align*}
   \log q_\theta(\vy_i | \vx_i) - \log q_\theta(\vy_j | \vx_i) 
= & \log \left( \prod_{d = 1}^D  (2 \pi (\sigma^{(d)}_i)^2)^{-1/2} \right) + \log \left( \prod_{d=1}^D \exp \left\{ -\frac{(y^{(d)}_i - \mu^{(d)}_i)^2}{2 (\sigma^{(d)}_i)^2 }\right\} \right)\\  &- \log \left( \prod_{d = 1}^D  (2 \pi (\sigma^{(d)}_i)^2)^{-1/2} \right) - \log \left( \prod_{d=1}^D \exp \left\{ -\frac{(y^{(d)}_j - \mu^{(d)}_i)^2}{2 (\sigma^{(d)}_i)^2 }\right\} \right) \\
= & \sum_{d=1}^D \left\{ -\frac{(y^{(d)}_i - \mu^{(d)}_i)^2}{2 (\sigma^{(d)}_i)^2 }\right\} - \sum_{d=1}^D \left\{ -\frac{(y^{(d)}_j - \mu^{(d)}_i)^2}{2 (\sigma^{(d)}_i)^2 }\right\} \\
=& -\frac{1}{2}  (\vy_i - \vmu_i)^{\mathrm{T}} \text{Diag}[\vsigma_i^{-2}](\vy_i - \vmu_i) + \frac{1}{2} (\vy_j - \vmu_i)^{\mathrm{T}} \text{Diag}[\vsigma_i^{-2}](\vy_j - \vmu_i),
\end{align*}
where $\text{Diag}[\vsigma_i^{-2}]$ is a $D \times D$ diagonal matrix with $(\text{Diag}[\vsigma_i^{-2}])_{d,d} = (\sigma_i^{(d)})^{-2}$, $d=1,2,\dots,D$.
The vCLUB estimator can be calcuated by
\begin{align*}
      \hat{\MI}_{\text{vCLUB}} 
     =& \frac{1}{N^2} \sum_{i=1}^N \sum_{j=1}^N \left[\log q_\theta(\vy_i  |\vx_i ) - \log {q_\theta(\vy_j| \vx_i )} \right] \nonumber \\
    % = &\frac{1}{N^2} \sum_{i=1}^N \sum_{j=1}^N \left[\sum_{d=1}^D \left\{ -\frac{(y^{(d)}_i - \mu^{(d)}_i)^2}{2 (\sigma^{(d)}_i)^2 }\right\} - \sum_{d=1}^D \left\{ -\frac{(y^{(d)}_j - \mu^{(d)}_i)^2}{2 (\sigma^{(d)}_i)^2 }\right\} \right] \\
    % = & \frac{1}{N} \sum_{i=1}^N \sum_{d=1}^D \left\{ -\frac{(y^{(d)}_i - \mu^{(d)}_i)^2}{2 (\sigma^{(d)}_i)^2 }\right\} - \frac{1}{N^2} \sum_{i=1}^N \sum_{j=1}^N  \sum_{d=1}^D \left\{ -\frac{(y^{(d)}_j - \mu^{(d)}_i)^2}{2 (\sigma^{(d)}_i)^2 }\right\} \\ 
     = & -\frac{1}{2} \left\{ \frac{1}{N} \sum_{i=1}^N (\vy_i - \vmu_i)^{\mathrm{T}} \text{Diag}[\vsigma_i^{-2}](\vy_i - \vmu_i)  \right\} + \frac{1}{2}\left\{ \frac{1}{N^2}\sum_{i=1}^N \sum_{j=1}^N(\vy_j - \vmu_i)^{\mathrm{T}} \text{Diag}[\vsigma_i^{-2}](\vy_j - \vmu_i) \right\}.
\end{align*}

\section{Detailed Experimental Setups}
%\textbf{MI estimation for Gaussian distribution:} All the MI lower bounds require learning of a value function $f(\vx,\vy)$; all the upper bounds require learning of a network approximation $q_\theta(\vy| \vx)$. To make fair comparison, we set the value function and the neural approximation with one hidden layer and the same  hidden units. For Gaussian setup, the number of hidden units is $20$; for Cubic setup, the number of hidden units is $40$. On the top of hidden layer outputs, we add the ReLU activation function.The learning rate for all estimators is set to $1\times 10^{-4}$. 

% \textbf{MI estimation for Bernoulli distribution:}
%  A two-dimensional joint Bernoulli distribution is  parameterized by$\left(\begin{array}{cc} p_{00} & p_{01} \\ p_{10} & p_{11} \end{array}\right)$, with $\sum_{i,j} p_{ij} = 1$ and $0\leq p_{ij} \leq 1$.
%  the mutual information between the two dimensions can be calculated by $\sum_{i,j} p_{ij} \log \frac{p_{ij}}{(p_{i0}+p_{i1})(p_{0j} + p_{1j})}$. We obtain the true parameter value for a target MI value by using gradient descent. The $p_{ij}$ values are implemented by the output of a Softmax function of $4$ logits. The neural approximation is implemented by two-layer MLP with hidden size $50$, The learning rate is $5\times 10^{-5}$. The batch size is $10$.

\textbf{Information Bottleneck:}
For the experiment on information bottleneck, we follow the setup from \citet{alemi2016deep}. The parameters $\mu_\sigma(\vx)$ and $\Sigma_\sigma(\vx)$ are the output from a MLP with layers $784 \to 1024 \to 1024 \to 2K$, where $K$ is the size of the bottleneck. We set $K=256$. For the variational classifier to implement the Barber-Agakov MI lower bound, the structure is set to a one-layer MLP. The batch size is 100. We set our learning rate to  $10^{-4}$, with an exponential decay rate of $0.97$ and a decay step of $1200$. 

\textbf{Domain Adaptation:}
The network is constructed as follows. 
Both feature extractors ($i.e.$, $E_c$ and $E_d$) are nine-layer convolutional neural network with leaky ReLU non-linearities. The content classifier $C$ and the domain discriminator $D$ are a one-layer and a two-layer MLPs, respectively. Images from each domain are normalized using Gaussian normalization. 

\begin{table}[hbt]
	\vskip 0.05in
	\centering
	\small
	%\centering
	\begin{tabular}{c|c|c}
		\toprule
		Classifier $C$ & Discriminator $D$ & Extractor (both $E_c$ and $E_d$) \\
		\midrule
		Content feature $\vz_c^s$ & Domain feature $\vz_d$ & Input data $\vx$  \\
		\midrule
		& & $3\times3$ conv. 64 lReLU, stride 1\\
	    & & $3\times3$ conv. 64 lReLU, stride 1\\
		& & $3\times3$ conv. 64 lReLU, stride 1\\
		& & $2\times2$ max pool, stride 2, dropout, $p = 0.5$, Gaussian noise, $\sigma=1$\\
		
		& & $3\times3$ conv. 64 lReLU, stride 1\\
		& MLP, 64 ReLU & $3\times3$ conv. 64 lReLU, stride 1\\
		MLP output $C(\vz_c^s)$ with shape 10 & MLP output $D(\vz_d)$ with shape 2 & $3\times3$ conv. 64 lReLU, stride 1\\                
		& & $2\times2$ max pool, stride 2, dropout, $p = 0.5$, Gaussian noise, $\sigma=1$\\
		
		& & $3\times3$ conv. 64 lReLU, stride 1\\
		& & $3\times3$ conv. 64 lReLU, stride 1\\
		& & $3\times3$ conv. 64 lReLU, stride 1\\
		& & global average pool, output feature with shape 64\\
		\bottomrule
	\end{tabular} 
%	\caption{Model architecture for the unsupervised domain adaptation experiments.}
	\label{Table:architecture}
\end{table}

\section{Numerical Results of MI Estimation}
% Table generated by Excel2LaTeX from sheet 'Sheet1'
We report the numerical results of MI estimation quality in Table~\ref{tab:MSE_MI_est}. The detailed setups are provided in Section~\ref{sec:gaussian-est}. Our CLUB estimator has the lowest estimation error when the ground-truth MI value goes larger.
\begin{table}[h]
  \centering

    \begin{tabular}{l|rrrrr|rrrrr}
    \toprule
               & \multicolumn{5}{c|}{Gaussian} &       \multicolumn{5}{c}{Cubic}   \\
    \midrule
    MI true value  & 2     & 4     & 6     & 8     & 10    & 2     & 4     & 6     & 8     & 10 \\
    \midrule
    VUB   & 3.85  & 15.33 & 34.37 & 61.25 & 95.70  & 2.09  & 10.38 & 25.56 & 47.84 & 77.59 \\
    NWJ   & 1.67  & 7.20   & 17.46 & 33.26 & 55.34 & 1.10   & 5.54  & 14.68 & 30.25 & 51.07 \\
    MINE  & 1.61  & 6.66  & 16.01 & 29.60  & 49.87 & 1.53  & 6.58  & 17.4  & 34.20  & 59.46 \\
    NCE   & 0.59  & 2.85  & 8.56  & 19.66 & 37.79 & 0.45  & 1.89  & 6.70   & 17.48 & 35.86 \\
    L1Out & \textbf{0.13}  & \textbf{0.11}  & 0.75   & 4.65  & 17.08 & 2.30   & 5.58  & 8.92  & 8.27  & 7.19 \\
    \midrule
    CLUB  & 0.15  & 0.12  & \textbf{0.70}  & \textbf{4.53}  & \textbf{16.57} & \textbf{2.22}  & \textbf{5.89}  & 8.25  & \textbf{8.23}  &\textbf{ 6.93} \\
    CLUBSample & 0.38  & 0.44  & 1.31  & 5.30   & 17.63 & 2.37  & \textbf{5.89}  & \textbf{8.07}  & 8.87  & 7.54 \\
    \bottomrule
    \end{tabular}%
     \caption{MSE of MI estimation}
  \label{tab:MSE_MI_est}%
\end{table}%

\end{document}